\definecolor{amethyst}{rgb}{0.6, 0.4, 0.8}
\def\eqref#1{equation~\ref{#1}}
\def\1{\bm{1}}
\DeclareMathAlphabet{\mathsfit}{\encodingdefault}{\sfdefault}{m}{sl}
\SetMathAlphabet{\mathsfit}{bold}{\encodingdefault}{\sfdefault}{bx}{n}
\def\gL{{\mathcal{L}}}
\newcommand{\Ls}{\mathcal{L}}
\newcommand{\R}{\mathbb{R}}
\newcommand{\Dt}{\Delta\theta}
\newcommand{\Lcal}{\mathcal{L}}
\newcommand{\RR}{\mathbb{R}} 
\newcommand{\T}{^\top}
\newcommand{\bp}{\bar p}
\newcommand{\balpha}{\bar \alpha}
\newcommand{\halpha}{\hat \alpha}
\theoremstyle{plain}
\newtheorem{theorem}{Theorem}[section]
\newtheorem{proposition}[theorem]{Proposition}
\newtheorem{lemma}[theorem]{Lemma}
\newtheorem{claim}[theorem]{Claim}
\theoremstyle{definition}
\newtheorem{assumption}[theorem]{Assumption}
\newtheorem{axiom}[theorem]{Axiom}
\theoremstyle{remark}
\newcommand{\ourmethod}{AuxiNash}
\newtheorem*{theorem*}{Theorem}
\newcommand*\bigcdot{\mathpalette\bigcdot@{1.}}
\newcommand*\bigcdot@[2]{\mathbin{\vcenter{\hbox{\scalebox{#2}{$\m@th#1\bullet$}}}}}
\begin{document}
\twocolumn[ 
\icmltitle{Auxiliary Learning as an Asymmetric Bargaining Game}
\icmlsetsymbol{equal}{*}

\begin{icmlauthorlist}
\icmlauthor{Aviv Shamsian}{equal,biu}
\icmlauthor{Aviv Navon}{equal,biu,aiol}
\icmlauthor{Neta Glazer}{biu,aiol}
\icmlauthor{Kenji Kawaguchi}{nus} \\
\icmlauthor{Gal Chechik}{biu,nvidia}
\icmlauthor{Ethan Fetaya}{biu}
\end{icmlauthorlist}

\icmlaffiliation{biu}{Bar-Ilan University, Ramat Gan, Israel}
\icmlaffiliation{aiol}{Aiola, Herzliya, Israel}
\icmlaffiliation{nvidia}{Nvidia, Tel-Aviv, Israel}
\icmlaffiliation{nus}{National University of Singapore}

\icmlcorrespondingauthor{Aviv Shamsian}{aviv.shamsian@live.biu.ac.il}
\icmlcorrespondingauthor{Aviv Navon}{aviv.navon@biu.ac.il}

\vskip 0.3in

]

\printAffiliationsAndNotice{\icmlEqualContribution}

\begin{abstract}
Auxiliary learning is an effective method for enhancing the generalization capabilities of trained models, particularly when dealing with small datasets. However, this approach may present several difficulties: (i) optimizing multiple objectives can be more challenging, and (ii) how to balance the auxiliary tasks to best assist the main task is unclear. In this work, we propose a novel approach, named \emph{\ourmethod{}}, for balancing tasks in auxiliary learning by formalizing the problem as a generalized bargaining game with asymmetric task bargaining power.
Furthermore, we describe an efficient procedure for learning the bargaining power of tasks based on their contribution to the performance of the main task and derive theoretical guarantees for its convergence. Finally, we evaluate \ourmethod{} on multiple multi-task benchmarks and find that it consistently outperforms competing methods.

\end{abstract}

\section{Introduction}
When training deep neural networks with limited labeled data, 
generalization can be improved by adding auxiliary tasks. In this approach, called \textit{Auxiliary learning} (AL), the auxiliary tasks are trained jointly with the main task, and their labels can provide a signal that is useful for the main task. AL is beneficial because auxiliary annotations are often easier to obtain than annotations for the main task. This is the case when the auxiliary tasks use self-supervision \citep{oliver2018realistic,HwangPKKHK20,achituve2021self}, or their annotation process is faster. For example, learning semantic segmentation of an image may require careful and costly annotation, but can be improved if learned jointly with a depth prediction task, whose annotations can be obtained at scale \citep{StandleyZCGMS20}. 


Auxiliary learning has a large potential to improve learning in the low data regime, but it gives rise to two main challenges:  Defining the joint optimization problem and performing the optimization efficiently. (1) First, given a main task at hand, it is not clear \textit{which} auxiliary tasks would benefit the main task and how tasks should be \textit{combined} into a joint optimization objective. For example, \citet{StandleyZCGMS20} showed that depth estimation is a useful auxiliary task for semantic segmentation but not the opposite. In fact, adding semantic segmentation as an auxiliary task harmed depth estimation performance. This suggests that even close-related tasks may interfere with each other. (2) Second, training with auxiliary tasks involves optimizing multiple objectives simultaneously; While training with multiple tasks can potentially improve performance via better generalization, it often underperforms compared to single-task models. 

Previous auxiliary learning research focused mainly on the first challenge: namely, weighting and combining auxiliary tasks \citep{lin2019adaptive}. The second challenge, optimizing the main task in the presence of auxiliary tasks, has been less explored. Luckily, this problem can be viewed as a case of optimization in \textit{Multi-Task Learning} (MTL). In MTL, there is extensive research on controlling optimization such that every task would benefit from the others. Specifically, several studies proposed algorithms to aggregate gradients from multiple tasks into a coherent update direction \citep{yu2020gradient,liu2021conflict,navon2022multi}. We see large potential in bringing MTL optimization ideas into auxiliary learning to address the optimization challenge.  

Here we propose a novel approach named \ourmethod{} that takes inspiration from recent advances in MTL optimization as a cooperative bargaining game \citep[Nash-MTL,][]{navon2022multi}. The idea is to view a gradient update as a shared resource, view each task as a player in a game, and have players compete over making the joint gradient similar to their own task gradient. In Nash-MTL, tasks play a symmetric role, since no task is particularly favorable. This leads to a bargaining solution that 
is proportionally fair across tasks. In contrast, task symmetry no longer holds in auxiliary learning, where there is a clear distinction between the primary task and the auxiliary ones.
As such, we propose to view auxiliary learning as an \emph{asymmetric bargaining game}. Specifically, we consider gradient aggregation as a cooperative bargaining game where each player represents a task with varying bargaining power. We formulate gradient update using asymmetric Nash bargaining solution which takes into account varying task preferences. By generalizing Nash-MTL to asymmetric games with \ourmethod, we can efficiently direct optimization solution towards various areas of the Pareto front. 

Determining the task preferences that result in optimal performance is a challenging problem for several reasons. First, the relationship between tasks can change during the optimization process, making it difficult to know in advance what preferences to use. This means that the process of preference tuning needs to be automated during training. Second, using a grid search to find the optimal preferences can be computationally expensive, and the complexity of the search increases exponentially with the number of tasks. To overcome these limitations, we propose a method for efficiently differentiating through the optimization process and using this information to automatically optimize task preferences during training. This can improve the performance of the primary task, and make it more efficient to find the optimal preferences.


\begin{figure}
    \centering
    \includegraphics[width=1.\linewidth]{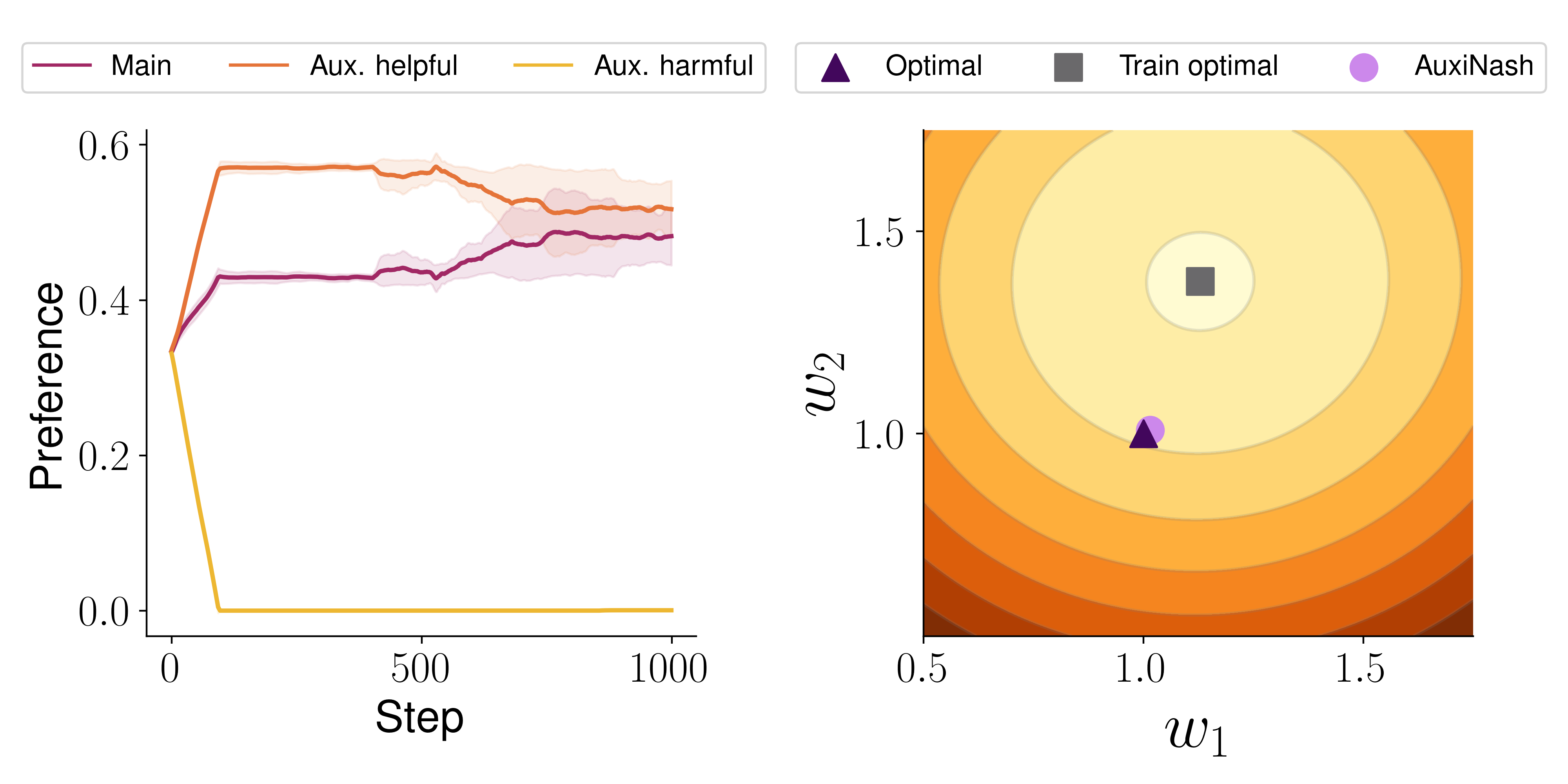}
    \caption{\textit{Illustrative example}: A regression problem in $\mathbb{R}^2$ with two auxiliary tasks, one helpful and one harmful. \ourmethod{} succeeds in using the helpful auxiliary task and disregards the harmful one, as demonstrated by how it learns to weigh different tasks: The left panel shows the preference vector $p$ during optimization. As a result, AuxiNash converges to a solution with large proximity to the optimal solution, far superior to that obtained from optimizing with the main tasks alone (right panel, darker colors indicate larger loss values). See Section~\ref{sec:illus} for further details. }
\label{fig:illustrative}
\end{figure}

We theoretically analyze the convergence of \ourmethod{} and show that even if the preference changes during the optimization process we are still guaranteed to converge to a Pareto stationary point. Finally, we show empirically on several benchmarks that \ourmethod{} achieves superior results to previous auxiliary learning and multi-task learning approaches. 

\paragraph{Contributions:} This paper makes the following contributions: (1) We introduce \ourmethod{} - a novel approach for auxiliary learning based on principles from asymmetric bargaining games. (2) We describe an efficient method to dynamically learn task preferences during training. (3) We theoretically show that \ourmethod{} is guaranteed to converge to a Pareto stationary point. (4) We conduct extensive experiments to demonstrate the superiority of \ourmethod{} against multiple baselines from MTL and auxiliary learning.

\section{Related Work}
 
\paragraph{Auxiliary Learning.} Learning with limited amount of training data is challenging since deep learning models tend to overfit to the training data and as a result can generalize poorly \citep{ying2019overview}. One  approach to overcome this limitation is using auxiliary learning \citep{chen2022auxiliary, kung2021efficient}. Auxiliary learning aims to improve the model performance on primary task by utilizing the information of related auxiliary tasks \citep{dery2022aang, chenmodule}. Most auxiliary learning approaches use a linear combination of the main and auxiliary losses to form a unified loss \citep{zhai2019s4l, wen2020entire}. Fine-tuning the weight of each task loss may be challenging as the search space of the grid search grows exponentially with the number of tasks. To find the beneficial auxiliary tasks, recent studies utilized the auxiliary task gradients and measure their similarity with the main task gradients \citep{lin2019adaptive, du2018adapting, shi2020auxiliary}. 
\citet{NavonAMCF21} proposed to learn a non-linear network that combines all losses into a single coherent objective function. 
\paragraph{Multi-task Learning.} In multi-task learning (MTL) we aim to solve multiple tasks by sharing information between them \citep{caruana1997multitask, ruder2017overview}, usually through joint hidden representation \citep{zhang2014facial, dai2016instance, pinto2017learning, Liu2019EndToEndML}. Previous studies showed that optimizing a model using MTL helps boost performances while being computationally efficient \citep{sener2018multi, chen2017gradnorm}. However, MTL presents a number of optimization challenges such as: conflicting gradients \citep{wang2020gradient, yu2020gradient}, and flatten areas in the loss landscape \citep{schaul2019ray}. These challenges may result with performance degradation compare with single task learning. Recent studies proposed novel architectures \citep{misra2016cross, hashimoto2017joint, Liu2019EndToEndML, Chen2020JustPA} to improve MTL while others focused on aggregating the gradients of the tasks such that it is agnostic to the optimized model \citep{liu2021towards, javaloy2021rotograd}. \citet{yu2020gradient} proposed to overcome the conflicting gradients problem by subtracting normal projection of conflicted task before forming an update direction. Most gradient based methods aim to minimize the average loss function. \citet{liu2021conflict} suggested an approach that will decrease every task loss in addition to the average loss function. The closest work to our approach is Nash-MTL \citep{navon2022multi}. The authors proposed a principled approach to dynamically weight the losses of different tasks by incorporating concepts from game theory. 
\paragraph{Bi-level Optimization.} Bi-Level Optimization (BLO) consists of two nested optimization problems~\citep{liao2018reviving,liu2021investigating, vicol2022implicit}. The outer optimization problem is commonly referred to as the upper-level problem, while the inner optimization problem is referred to as the lower-level problem \citep{sinha2017review}. BLO is widely used in a variety of deep learning applications, spanning hyper-parameter optimization \citep{foo2007efficient, mackay2019self}, meta learning \citep{franceschi2018bilevel}, reinforcement learning \citep{zhang2020bi, yang2019provably}, and multi-task learning \citep{liu2022auto, NavonAMCF21}. A common practice to derive the gradients of the upper-level task is using the implicit function theorem (IFT). However, applying IFT involves the calculation of an inverse-Hessian vector product which is infeasible for large deep learning models. Therefore, recent studies proposed diverse approaches to approximate the inverse-Hessian vector product. \citet{luketina2016scalable} proposed approximating the Hessian with the identity matrix, where other line of works used conjugate gradient (CG) to approximate the product \citep{foo2007efficient, pedregosa2016hyperparameter, Rajeswaran2019MetaLearningWI}. We use a truncated Neumann series and efficient vector-Jacobian products, as it was empirically shown to be more stable than CG \citep{liao2018reviving, lorraine2020optimizing, raghu2020teaching}.

\begin{figure}
    \centering
    \includegraphics[width=0.9\linewidth]{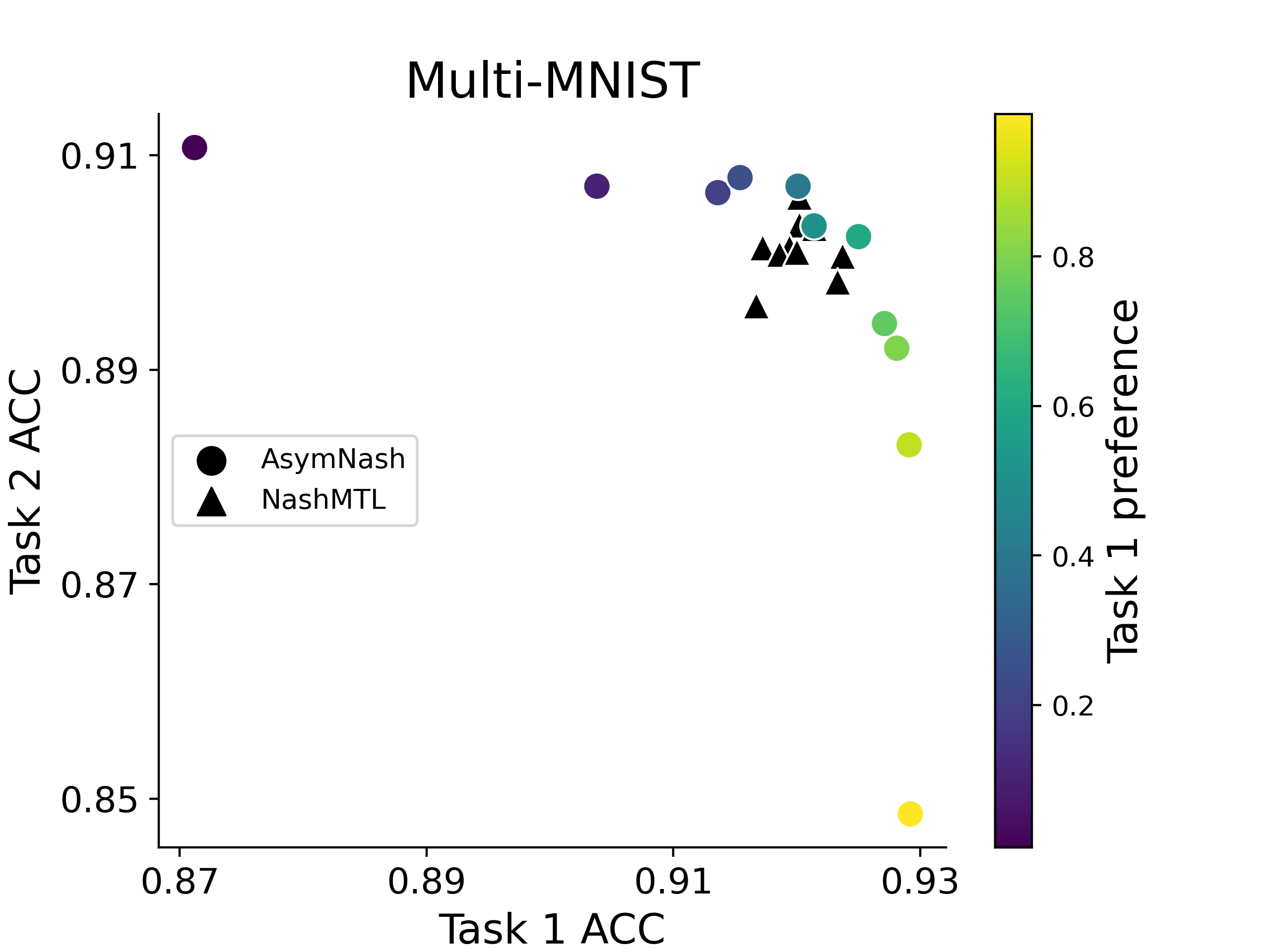}
    \caption{\textit{Task preferences}: By varying the preference vector $p$, we show that \ourmethod{} can control the trafe-off between tasks. Compared with Nash-MTL, AuxiNash achieves a wider range of diverse solutions, an important property for auxiliary learning. See Section~\ref{sec:multimnist} for further details.}
\label{fig:multimnist}
\end{figure}

\section{Background}

\subsection{Nash Bargaining Solution}\label{sec:nbs}

We will first give a quick introduction to cooperative bargaining games. In a bargaining game, a set of $K$ players jointly decide on an agreed-upon point in the set $A$ of all agreement points. If failing to reach an agreement, the game default to the disagreement point $D$. Each player $i\in [K]:=\{1, ...,K\}$ is equipped  with a  utility function $u_i:A\cup\{D\}\rightarrow\R$, which they wish to maximize. Intuitively, each player has a different objective, and each tries to only maximize their own personal utility. However,  we generally assume that there are points in the agreement set that are mutually beneficial to all players, compared to the disagreement point, and as such the players are incentivized to cooperate. The main question is on which point in the agreement set will they decide upon.\\

 Denote by $U=\{(u_1(x),...,u_K(x)):\,x\in A\}\subset\R^K$ the set of the utilities of all possible agreement points and $d=(u_1(D),...,u_K(D))$. The set $U$ is assumed to be convex and compact. Furthermore, we assume that there exists a point $u\in U$ for which $\forall i:u_i>d_i$. \citet{nash} showed that under these assumptions, there exists a unique solution to the bargaining game, which satisfies the following properties: Pareto optimality, symmetry, independence of irrelevant alternatives, and invariant to affine transformations \citep[see][and the supplementary material for more details]{game_theory}. This unique solution, referred to as the Nash Bargaining solution (NBS), is given by 
\begin{align}\label{eq:nash_barg}
    u^*=&\arg\max_{u\in U}\sum_i\log(u_i-d_i)\\
    &s.t.\,\,\forall i:\,u_i>d_i\nonumber.
\end{align}
As shown in~\citet{navon2022multi}, NBS properties are suitable for the multi-task learning setup, even if the invariance to affine transformations implies that gradient norms are ignored. The symmetry assumption, however, implies that each player is interchangeable which is not the case for  auxiliary learning. Naturally, our main concern is the main task and the auxiliaries are there to support it, not compete with it. Thus, we wish to discard the symmetry assumption for the auxiliary learning setup.


\subsection{Generalized Bargaining Game}

\citet{kalai1977nonsymmetric} generalized the NBS to the asymmetric case. First, define a preference vector to control the relative trade-off between tasks $p \in \R^K$ with $p_i>0$ and $\sum_i p_i=1$ (see Figure~\ref{fig:multimnist}). Similar to the symmetric case, the Generalized Nash Bargaining Solution (GNBS) maximizes a weighted product of utilities,
\begin{align}\label{eq:asym_nash_barg}
    u^*=&\arg\max_{u\in U}\sum_ip_i\log(u_i-d_i)\\
    &s.t.\,\,\forall i:\,u_i>d_i\nonumber \quad.
\end{align}
The symmetric case is a special case of GNBS with uniform preferences $p_i=1/K,~\forall i\in [K]$.

\subsection{Bargaining Game for Multi-task learning}

Recently,~\citet{navon2022multi} formalized multi-task learning as a bargaining game as follows. Let $\theta \in \RR^d$ denote the parameters of a network $f(\cdot; \theta)$. At each MTL optimization step, we search for an update direction $\Delta\theta$. Define the agreement set $U=\{\Delta\theta\ \mid \Vert \Delta\theta \Vert \leq 1 \}$ as the set of all possible update directions. The disagreement point $d$ is defined to be equal to zero, i.e., to stay with the current parameters and terminate the optimization process. Let $g_i$ denote the gradient of $\theta$ w.r.t. the loss of task $i$ (for each $i\in[K]$). The utility function for task $i$ is defined as $u_i(\Delta\theta)=g_i\T\Delta\theta$, i.e., the directional derivative in direction $\Delta\theta$. \citet{navon2022multi} assumed that the gradients $g_1,...,g_K$ are linearly independent if $\theta$ is not Pareto stationary, and we adopt that assumption in our analysis. Under this assumption, \citet{navon2022multi} show that the solution for the bargaining game at any non-Pareto stationary point $\theta$ is given by $\Delta\theta=\sum_i\alpha_ig_i$, where the weight vector $\alpha$ satisfies
\begin{align}
    G\T G\alpha=1/\alpha.
\end{align}
Here, $G$ is the $d\times K$ matrix whose $i$-th column is the $i$-th task gradient $g_i$, and $1/\alpha$ is the element-wise reciprocal.

\begin{figure*}[t]
    \centering
\includegraphics[width=1.\linewidth]{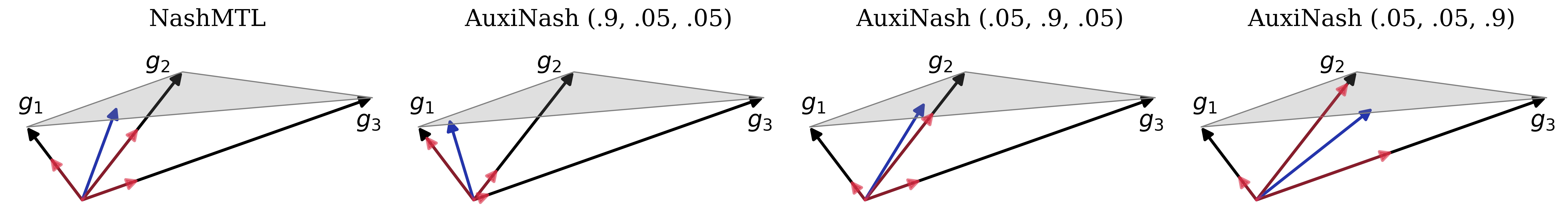}    
    \caption{\textit{Visualization of the update direction}: We show the update direction {(\textcolor{blue}{blue})} obtained by \ourmethod{} on three gradients in $\R^3$. We rescaled the update directions for better visibility, showing only the direction. We further show the size of the projection {(\textcolor{red}{red})} of the update to each gradient direction {(\textbf{black})}. By varying the preference vector, we observe the change in the obtained update direction. Importantly, we note that the effect on the update direction is non-trivial, as $p$ only affects the update implicitly through the bargaining solution $\alpha$.}
    \label{fig:update}
\end{figure*}

\section{Generalized Bargaining Game for Auxiliary Learning}

In this section, we first extend the result from~\citet{navon2022multi} to the asymmetric case. Next, we describe a method to learn the preference vector $p$. 

\subsection{Generalized Bargaining Solution} 
We prove the following claim, which generalizes the claim of ~\citet{navon2022multi} to asymmetric  games.


\begin{claim}
Let $p\in \mathbb{R}^K_+$ with $\sum_i p_i=1$. The solution to the generalized bargaining problem $\Delta\theta^*=\arg\max_{\Dt\in U}\sum_i p_i \log(\Dt\T g_i)$ is given by (up to scaling) $\sum_i\alpha_i g_i$ at any non-Pareto stationary point $\theta$, where $\alpha\in\mathbb{R}^K_+$ is the solution to $G\T G\alpha=p/\alpha$ where $p/\alpha$ is the element-wise reciprocal.  
\end{claim}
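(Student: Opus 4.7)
The plan is to follow the same KKT-based strategy that \citet{navon2022multi} used for the symmetric Nash-MTL case, but carry the preference weights $p_i$ through the first-order conditions and see that they appear multiplicatively in the reciprocal term.

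First I would establish that the problem is well posed at a non-Pareto stationary point $\theta$. By the linear independence assumption on $g_1,\dots,g_K$, the set $\{\Dt\in U:\, g_i\T\Dt>0,\,\forall i\}$ is nonempty (any strictly positive combination of the duals in the span of $G$ works), so the objective $F(\Dt):=\sum_i p_i\log(g_i\T\Dt)$ is finite somewhere in the feasible region. Because $\log$ is concave and $g_i\T\Dt$ is linear, $F$ is concave; moreover $F(\Dt)\to-\infty$ whenever $g_i\T\Dt\to 0^+$ for some $i$, so the maximizer lies strictly inside the positive cone $\{g_i\T\Dt>0\}$. Finally, since $F$ is strictly increasing along any positive scaling of $\Dt$, the norm constraint $\|\Dt\|\le 1$ must be active at the optimum.

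Next I would write the Lagrangian $L(\Dt,\lambda)=\sum_i p_i\log(g_i\T\Dt)-\lambda(\|\Dt\|^2-1)$ and impose the KKT stationarity condition
\begin{equation*}
\nabla_{\Dt}L=\sum_i\frac{p_i}{g_i\T\Dt^*}\,g_i\;-\;2\lambda\,\Dt^*=0,
\end{equation*}
which yields $\Dt^*=\frac{1}{2\lambda}\sum_i\frac{p_i}{g_i\T\Dt^*}g_i$. Defining $\tilde\alpha_i:=\frac{p_i}{2\lambda\,g_i\T\Dt^*}$ gives the representation $\Dt^*=\sum_i\tilde\alpha_i g_i$ with $\tilde\alpha_i>0$ (because $p_i>0$ and $g_i\T\Dt^*>0$). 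Substituting this representation back into $g_i\T\Dt^*=(G\T G\tilde\alpha)_i$ and using the defining identity $\tilde\alpha_i(g_i\T\Dt^*)=p_i/(2\lambda)$ produces
\begin{equation*}
(G\T G\tilde\alpha)_i=\frac{p_i}{2\lambda\,\tilde\alpha_i},\qquad i=1,\dots,K,
\end{equation*}
i.e.\ $G\T G\tilde\alpha=\tfrac{1}{2\lambda}\,p/\tilde\alpha$ in the element-wise reciprocal notation.

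The final step is a rescaling to remove the multiplier $\lambda$: setting $\alpha:=\sqrt{2\lambda}\,\tilde\alpha$, the left-hand side scales by $\sqrt{2\lambda}$ and the right-hand side by $1/\sqrt{2\lambda}$, giving exactly $G\T G\alpha=p/\alpha$ with $\alpha\in\mathbb{R}_+^K$. Correspondingly, $\sum_i\alpha_i g_i=\sqrt{2\lambda}\,\Dt^*$, so the two are equal up to a positive scalar, which is the "up to scaling" claim. The main obstacle I anticipate is purely expository, namely justifying that the norm constraint is active and that the optimum lies in the interior of the positive cone; both reduce immediately to the linear-independence assumption and the barrier behavior of $\log$, and uniqueness of the direction follows from strict concavity of $F$ on the positive cone (the Hessian $-\sum_i\frac{p_i}{(g_i\T\Dt)^2}g_ig_i\T$ is negative definite on the range of $G$, which contains $\Dt^*$).
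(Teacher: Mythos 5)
Your proposal is correct and follows essentially the same route as the paper's proof: first-order optimality on the boundary of the unit ball forces $\nabla F$ to be parallel to $\Delta\theta^*$, linear independence places $\Delta\theta^*$ in the span of the $g_i$, and the Lagrange multiplier is absorbed by rescaling $\alpha$. Your explicit $\sqrt{2\lambda}$ rescaling (and the well-posedness remarks) merely make precise the paper's terser ``setting $\lambda=1$ as we ignore scale'' step.
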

\begin{proof}
We define $F(\Delta\theta)=\sum_i p_i \log(\Dt\T g_i)$ and have $\nabla F=\sum_{i=1}^K \frac{p_i}{\Delta\theta^T g_i} g_i$. Note that for all $\Delta\theta$ with $u_i(\Delta\theta)>0$ for all $i\in [K]$ the utilities are monotonically increasing in $\Vert \Delta\theta\Vert$, hence the optimal solution lies on the boundary of $U$, and $\nabla F|_{\Delta\theta^*}$ is parallel to $\Delta\theta^*$. This implies that  $\sum_{i=1}^K \frac{p_i}{\Delta\theta\T g_i} g_i=\lambda \Delta\theta$ for some $\lambda > 0$. From the {linear} independence assumption, we have for the optimal solution $\Delta\theta=\sum_i\alpha_i g_i$, thus $\forall i, \Delta\theta\T g_i=\frac{p_i}{\lambda \alpha_i}$. Setting $\lambda=1$ (as we ignore scale), the solution to the bargaining game is reduced to finding $\alpha\in \mathbb{R}^K_+$ for which $\forall i,~\Delta\theta\T g_i=\sum_j\alpha_jg_j\T g_j=p_i/\alpha_i$. Equivalently, the optimal solution is given by $\alpha\in \mathbb{R}^K_+$ such that $G\T G\alpha= p/\alpha$ where $p /\alpha$ is the element-wise reciprocal.
\end{proof}

Given a preference vector $p$, we solve $G\T G\alpha= p/\alpha$ by expressing it as the solution to an optimization problem. We first solve a convex relaxation which we follow by a concave-convex procedure (CCP) 
\citep{yuille2003concave,lipp2016variations}, similar to \citet{navon2022multi} for solving $G\T G \alpha=p/\alpha$ w.r.t. $\alpha$. See Appendix~\ref{app:solving-the-game} for full details.


\subsection{Optimizing the Preference Vector}

The derivation in the previous section allows us to learn using a known preference vector $p$. Unfortunately, in most cases,  the preference vector is {not} known in advance. One simple solution is to treat the preferences $p_i$ as hyperparameters and set them via grid search. However, this approach has two significant limitations. First, as the number of hyperparameters increases, grid search becomes computationally expensive as it scales exponentially. Second, it is possible that the optimal preference vector would vary {during optimization}, hence using a fixed  $p$ would be sub-optimal.

To address these issues, we develop an approach for dynamically learning the task preference vector during training. This reduces the number of hyperparamters the user needs to tune to one (the preference update rate) and dynamically adjusts the preference to improve generalization. We do this by formulating the problem as bi-level optimization, which we discuss next.

Let $\gL_T$ denote the training loss and $\gL_V$ denote the \textit{generalization} loss, given by the loss of the main task on unseen data, i.e., $\gL_V=\ell_{main}^{val}$. In the auxiliary learning setup, we wish to optimize $p$ such that a network $f(\cdot;\theta)$ optimized with $\gL_T(\cdot; p)$ would minimize $\gL_V$. Formally, 
\begin{align*}
    p^*=\arg\min_{p} \gL_V(\theta^*(p)),~~\text{s.t.}~~\theta^*(p)=\arg\min_{\theta} \gL_T(\theta, p).
\end{align*}
Using the chain rule to get the derivative of the outer problem, we get
\begin{align*}
    \frac{\partial \Ls_V(p,\theta^*(p))}{\partial p} \hspace{-2pt}= \hspace{-2pt} \underbrace{\frac{\partial \Ls_V}{\partial p}}_{=0}+\frac{\partial \Ls_V}{\partial \theta} \frac{\partial \theta^*}{\partial p}
    \hspace{-2pt} = \hspace{-2pt}  \frac{\partial \Ls_V}{\partial \theta} \frac{\partial \theta^*}{\partial \alpha(p)} \frac{\partial \alpha(p)}{\partial p}.
\end{align*}
As we can compute $\frac{\partial \Ls_V}{\partial \theta}$ by simple backpropagation, the main challenge is to compute $\frac{\partial \theta^*}{\partial \alpha(p)}$ and $ \frac{\partial \alpha(p)}{\partial p}$.

To compute $\frac{\partial \theta^*}{\partial \alpha(p)}$ we can (indirectly) differentiate through the optimization process using the implicit function theorem (IFT)  \citep{liao2018reviving,lorraine2020optimizing,NavonAMCF21}:
\begin{align}
    \frac{\partial \theta^*}{\partial \alpha(p)}=-\left[\frac{\partial^2\gL_T}{\partial \theta\partial \theta\T}\right]^{-1} \frac{\partial^2\gL_T}{\partial\theta\partial\alpha(p)\T}.
\end{align}
Since computing the inverse Hessian directly is intractable, we use the algorithm proposed by \citet{lorraine2020optimizing} to efficiently estimate the inverse-Hessian vector product. This approach uses the Neumann approximation with an efficient vector-Jacobian product. Thus, we can efficiently approximate the first term, $\frac{\partial \Ls_V}{\partial \theta} \frac{\partial \theta^*}{\partial \alpha(p)}$. We note that in practice, as in customary, we do not optimize till convergence but perform a few gradient updates from the previous value. For further details, see ~\citet{vicol2022implicit} that recently examined how this affects the bi-level optimization process.
 
For the second term, $\frac{\partial \alpha(p)}{\partial p}$, we derive a simple analytical expression using the IFT in the following proposition:
\begin{proposition}\label{prop:dalpha-dp}
For any $(p,\alpha)$ satisfying $G\T G \alpha = p/\alpha$, there exists an open set $U \subseteq \RR^K$ containing $p$ such that there exists  a  continuously differentiable function $\halpha:U \to \RR^K$ satisfying all of the following properties: (1) $\halpha(p)=\alpha$, (2) $G\T G\halpha(\bar{p}) = \bar{p}/\halpha(\bar{p})$ for all $\bar{p} \in U$, and (3)
\begin{align}
\begin{split}
\frac{\partial \halpha(p)}{\partial p}=\left[G\T G+\Lambda_0\right]^{-1} \Lambda_1. 
\end{split}
\end{align}
Here $\Lambda_0 ,\Lambda_1\in \RR^{K\times K}$ are the diagonal matrices defined by $(\Lambda_0)_{ii}=p_i/\alpha_{i}^2 \in \RR$ and $(\Lambda_1)_{ii}=1/\alpha_{i} \in \RR$ for $i\in [K]$.
\end{proposition}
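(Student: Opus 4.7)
The plan is to apply the implicit function theorem to the defining equation $G\T G\alpha = p/\alpha$. Concretely, I would define $F\colon \RR^K_+ \times \RR^K_+ \to \RR^K$ by
\[
F(\alpha, p) \;=\; G\T G\,\alpha \;-\; p/\alpha,
\]
where the division is element-wise. Then $F$ is smooth on $\RR^K_+ \times \RR^K_+$, and by hypothesis $F(\alpha,p)=0$ at the given pair. Everything reduces to checking that the partial Jacobian with respect to $\alpha$ at $(\alpha,p)$ is invertible, after which the IFT delivers the neighborhood $U$, the smooth function $\halpha$, and properties (1)--(2) automatically; property (3) then drops out of the usual formula $\partial \halpha/\partial p = -[\partial_\alpha F]^{-1}\partial_p F$.

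The routine calculations are the two partials. Writing $F_i = \sum_j (G\T G)_{ij}\alpha_j - p_i/\alpha_i$, I would compute
\[
\frac{\partial F_i}{\partial \alpha_j} = (G\T G)_{ij} + \delta_{ij}\,\frac{p_i}{\alpha_i^2}, \qquad \frac{\partial F_i}{\partial p_j} = -\,\frac{\delta_{ij}}{\alpha_i},
\]
which give $\partial_\alpha F = G\T G + \Lambda_0$ and $\partial_p F = -\Lambda_1$ with $\Lambda_0, \Lambda_1$ as in the statement. Plugging these into the IFT identity yields exactly
\[
\frac{\partial \halpha(p)}{\partial p} = \bigl[G\T G + \Lambda_0\bigr]^{-1}\Lambda_1,
\]
which is (3).

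The only genuine step is verifying invertibility of $G\T G + \Lambda_0$. Here I would use two facts already in force in the paper: at a non-Pareto stationary $\theta$, the task gradients $g_1,\dots,g_K$ are assumed linearly independent, so $G\T G$ is positive definite; and under the generalized bargaining formulation we have $\alpha \in \RR^K_+$ and $p \in \RR^K_+$, so $\Lambda_0 = \operatorname{diag}(p_i/\alpha_i^2)$ is positive definite as well. The sum of a positive definite matrix with a positive definite diagonal matrix is positive definite and hence invertible, so the hypothesis of the IFT is met. (If one prefers to avoid the linear-independence assumption here, it suffices to note that $G\T G$ is positive semidefinite and $\Lambda_0 \succ 0$, which already gives invertibility.)

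The main obstacle I anticipate is not analytical but bookkeeping: making sure the domain on which $F$ is smooth (i.e.\ $\alpha$ strictly positive) is respected, so that the open set $U$ obtained from the IFT can be shrunk, if necessary, to guarantee $\halpha(\bar p) \in \RR^K_+$ for all $\bar p \in U$; this is immediate by continuity of $\halpha$ at $p$ since $\halpha(p) = \alpha \in \RR^K_+$. Beyond that, the proof is a direct IFT computation with no subtleties.
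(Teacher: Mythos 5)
Your proposal is correct and follows essentially the same route as the paper: define $F(\bar p,\bar\alpha)=G\T G\bar\alpha-\bar p/\bar\alpha$, compute the two partial Jacobians $G\T G+\Lambda_0$ and $-\Lambda_1$, and invoke the implicit function theorem. The invertibility argument you offer in parentheses (positive semidefinite $G\T G$ plus positive definite $\Lambda_0$) is in fact exactly the one the paper uses, so no reliance on the linear-independence assumption is needed.
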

We refer the readers to Appendix~\ref{app:proofs} for the proof.

Putting everything together, we obtain the following efficient approximation,
\begin{align}\label{eq:hypergrad}
\small
\begin{split}
 \frac{\partial \Ls_V(p,\theta^*(p))}{\partial p}
&=-\frac{\partial \Ls_V}{\partial \theta} \left[\frac{\partial^2\gL_T}{\partial \theta\partial \theta\T}\right]^{-1} \frac{\partial^2\gL_T}{\partial\theta\partial\alpha(p)\T}
\times\\ 
& \qquad \left[G\T G+\Lambda_0\right]^{-1} \Lambda_1.
\end{split} 
\end{align}
We note that this approximation can be computed in a relatively efficient manner, with the cost of only several backpropagation operations to estimate the vector-Jacobian product (we use 3 in our experiments). We also note that the matrix $\left(G\T G+\Lambda_0\right)$ that we invert is of size $K\times K$, where $K$ is the number of tasks that is generally relatively small. 

 
In practice, we use a separate batch from the training set to estimate the generalization loss $\gL_V$. We further discuss this design choice and provide an empirical evaluation in Section~\ref{sec:aux-set}. During the optimization process, we alternate between optimizing $\theta$ and optimizing $p$. Specifically, we update $p$ once every $N_p$ optimization steps over $\theta$. We set $N_p=25$ in our experiments. The \ourmethod{} algorithm is summarized in Alg.~\ref{alg:auxinash}.


\begin{algorithm}[t]
    \caption{\ourmethod{}}\small\label{alg:auxinash}
    \begin{algorithmic}[H]
    \STATE {\bfseries Input:} $\theta$ -- initial parameter vector, $p$ -- initial preference vector, $\{\ell_i\}_{i=1}^K$ -- differentiable loss functions, $\eta, \eta_p$ -- learning rates
    \FOR{$T=1,...,N$}
    \FOR{$t=1,...,N_p$}
    \STATE Compute task gradients $g_i=\nabla_{\theta}\ell_i$
    \STATE Set $G$ the matrix with columns $g_i$
    \STATE Solve for $\alpha$: $G\T G\alpha=p/\alpha$
    \STATE Update the parameters $\theta\gets\theta-\eta G\alpha$
    \ENDFOR
    \STATE Evaluate $\nabla_p\gL_V$ using Eq.~\ref{eq:hypergrad}
    \STATE Update $p\gets p-\eta_p \nabla_p\gL_V$
    \ENDFOR
    \STATE {\bfseries Return:} $\theta$.
    \end{algorithmic}
\end{algorithm}

\begin{table*}[!t]
\setlength{\tabcolsep}{3pt}
\small
    \centering
    \caption{\textit{NYUv2}. Test performance for three tasks: semantic segmentation, depth estimation, and surface normal. Values are averages over 3 random seeds.}
    \vskip 0.11in
\resizebox{0.95\textwidth}{!}{%
\begin{tabular}{lccccccccccccccccccc}
\toprule\\
 &  &  & \multicolumn{2}{c}{Segmentation} &  & \multicolumn{2}{c}{Depth} &  & \multicolumn{8}{c}{Surface Normal} &  &  &  \\
 \cmidrule(lr){4-5} \cmidrule(lr){7-8} \cmidrule(lr){10-17}
 &  &  & \multirow{2}{*}{mIoU $\uparrow$} & \multirow{2}{*}{Pix Acc $\uparrow$} &  & \multirow{2}{*}{Abs Err $\downarrow$} & \multirow{2}{*}{Rel Err $\downarrow$} &  & \multicolumn{2}{c}{Angle Distance $\downarrow$} &  & \multicolumn{5}{c}{Within $t^\circ$  $\uparrow$}  & $\mathbf{\Delta\%} \downarrow$ \\
 \cmidrule(lr){10-11} \cmidrule(lr){13-17} \cmidrule(lr){19-19} \cmidrule(lr){20-20}
 &  &  &  &  &  &  &  &  & Mean & Median &  & 11.25 &  & 22.5 &  & 30 \\
 \midrule
 & \multicolumn{2}{c}{STL} & $38.30$ & $63.76$ &  & $0.6754$ & $0.2780$ &  & $25.01$ & $19.21$ &  & $30.14$ &  & $57.20$ &  & $69.15$ \\
  \midrule
 & \multicolumn{2}{c}{LS} & $38.43$ & $64.36$ &  & $0.5472$ & $0.2184$ &  & $29.57$ & $25.42$ &  & $20.50$ &  & $44.85$ &  & $58.20$ & $~~~8.69$ \\
 
 & \multicolumn{2}{c}{PCGrad} & $39.25$ & $64.95$ &  & $0.5389$ & $0.2141$ &  & $28.66$ & $24.26$ &  & $21.99$ &  & $47.00$ &  & $60.31$ & $~~~5.66$ \\
 
 & \multicolumn{2}{c}{CAGrad} & $39.25$ & $65.15$ &  & $0.5385$ & $0.2155$ &  & $26.11$ & $20.95$ &  & $26.96$ &  & $53.66$ &  & $66.37$ & $-1.46$ \\
 & \multicolumn{2}{c}{Nash-MTL} & $39.83$ & $66.00$ &  & $0.5235$ & $0.2075$ &  & $25.32$ & $19.87$ &  & $28.86$ &  & $55.87$ &  & $68.27$ & $-4.76$ \\
 \midrule
 
 & \multicolumn{2}{c}{GCS} & $38.96$ & $64.35$ && $0.5769$ & $0.2293$ && $29.57$ & $25.53$ && $20.64$ && $44.68$ && $57.99$ & $~~~9.54$ \\
 & \multicolumn{2}{c}{OL-AUX} & $40.51$ & $65.49$ && $0.6652$ & $0.2614$ && $\mathbf{24.65}$ & $\mathbf{18.72}$ && $\mathbf{30.92}$ && $\mathbf{58.37}$ && $\mathbf{70.12}$ & $-2.88$\\
 & \multicolumn{2}{c}{AuxiLearn} & $38.63$ & $64.20$ && $0.5415$ & $0.2173$ && $29.98$ & $25.29$ && $20.03$ && $43.94$ && $57.17$ & $~~~9.15$ &  \\
 \midrule
 & \multicolumn{2}{c}{\ourmethod{} (ours)} &
 
 $\mathbf{40.79}$ & $\mathbf{66.79}$ &  & $\mathbf{0.5092}$ & $\mathbf{0.2042}$ & & $24.90$ & $19.31$ &  & $29.83$ &  & $57.07$ &  & $69.27$ & $\mathbf{-6.80}$\\
 \bottomrule
\end{tabular}%
}
\label{tab:nyu}
\end{table*}

\section{Analysis}
We analyze the convergence properties of our proposed method in nonconvex optimization. We adopt the following three assumptions from~\citet{navon2022multi}:
\begin{assumption}\label{assump:independence}
We assume that for a sequence $\{\theta^{(t)}\}_{t=1}^\infty$ generated by our algorithm, the set of the gradient vectors $g_1^{(t)},...,g_K^{(t)}$ at any point on the sequence and at any partial limit are linearly independent unless that point is a Pareto stationary point. 
\end{assumption}
\begin{assumption}\label{assump:diff}
We assume that all loss functions are differentiable, bounded below and that all sub-level sets are bounded. The input domain is open and convex.
\end{assumption}
\begin{assumption}\label{assump:L-smooth}
We assume that all the loss functions are $L$-smooth,
\begin{equation}
    \Vert \nabla\ell_i(x)-\nabla\ell_i(y)\Vert \leq L\Vert x-y\Vert.
\end{equation}
\end{assumption}
Since even single-task non-convex optimization might only admits convergence to a stationary point, the following theorem proves convergence to a Pareto stationary point when both $\theta$ and $p$ are optimized concurrently:
\begin{theorem}\label{th:novergence}
Suppose that Assumptions  \ref{assump:independence}, \ref{assump:diff}, and  \ref{assump:L-smooth} hold.  Let $\{\theta^{(t)}\}_{t=1}^\infty$ be the sequence generated by the update rule $\theta^{(t+1)}=\theta^{(t)}-\mu^{(t)}\Dt^{(t)}$ where $\Dt^{(t)}=\sum_{i=1}^K\alpha^{(t)}_ig_i^{(t)}$ is the weighted Nash bargaining solution $(G^{(t)})\T G^{(t)}\alpha^{(t)}=p^{(t)}/\alpha^{(t)}$ where $p^{(t)}$ can be any discrete distribution. Set $\mu^{(t)}=\frac{1}{K}\sum_{i=1}^{K}p^{(t)}_{i}(L\alpha^{(t)}_i)^{-1}$. The sequence $\{\theta^{(t)}\}_{t=1}^\infty$ has a subsequence that converges to a Pareto stationary point $\theta^*$. Moreover all the loss functions $(\ell_1(\theta^{(t)}),...,\ell_K(\theta^{(t)}))$ converge to $(\ell_1(\theta^*),...,\ell_K(\theta^*))$.
\end{theorem}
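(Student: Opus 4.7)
The plan is to follow the classical descent-lemma plus compactness template for nonconvex gradient methods, adapted to the Nash-bargaining update direction, extending the symmetric argument of \citet{navon2022multi} to arbitrary preference vectors. The first useful observation is that the defining equation $G^{(t),T}G^{(t)}\alpha^{(t)} = p^{(t)}/\alpha^{(t)}$ forces a uniform step-norm identity:
\[
\|\Delta\theta^{(t)}\|^2 = \alpha^{(t),T} G^{(t),T} G^{(t)} \alpha^{(t)} = \sum_i \alpha_i^{(t)} \cdot p_i^{(t)}/\alpha_i^{(t)} = \sum_i p_i^{(t)} = 1,
\]
and the individual directional derivatives satisfy $g_i^{(t),T}\Delta\theta^{(t)} = p_i^{(t)}/\alpha_i^{(t)} \ge 0$. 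Both facts hold regardless of how $p^{(t)}$ evolves.

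Next I apply $L$-smoothness (Assumption~\ref{assump:L-smooth}) to each $\ell_i$, average over $i$, and plug in the prescribed step size $\mu^{(t)} = (KL)^{-1}\sum_i p_i^{(t)}/\alpha_i^{(t)}$. The cross-term $-(\mu^{(t)}/K)\sum_i p_i^{(t)}/\alpha_i^{(t)} = -L(\mu^{(t)})^2$ dominates the quadratic remainder $L(\mu^{(t)})^2/2$, yielding the descent inequality
\[
\tfrac{1}{K}\sum_i \ell_i(\theta^{(t+1)}) \le \tfrac{1}{K}\sum_i \ell_i(\theta^{(t)}) - \tfrac{L}{2}(\mu^{(t)})^2.
\]
Since each $\ell_i$ is bounded below (Assumption~\ref{assump:diff}), telescoping gives $\sum_t (\mu^{(t)})^2<\infty$, so $\mu^{(t)}\to 0$ and $\sum_i p_i^{(t)}/\alpha_i^{(t)}\to 0$; non-negativity then forces every individual $p_i^{(t)}/\alpha_i^{(t)}$ to $0$.

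The third and most delicate step is the subsequential Pareto-stationarity argument. Bounded sub-level sets (Assumption~\ref{assump:diff}) keep $\{\theta^{(t)}\}$ in a compact set, and the unit-norm fact on $\Delta\theta^{(t)}$ lets me extract a subsequence with $\theta^{(t_k)}\to\theta^*$ and $\Delta\theta^{(t_k)}\to\Delta\theta^*$ of unit norm. Passing to the limit yields $g_i^{*,T}\Delta\theta^* = 0$ for every $i$. Suppose for contradiction that $\theta^*$ is not Pareto stationary; by Assumption~\ref{assump:independence} the $g_i^*$ are linearly independent. I then need to show the weight vectors $\alpha^{(t_k)}$ stay bounded: if they did not, normalizing $\hat\alpha^{(t_k)} = \alpha^{(t_k)}/\|\alpha^{(t_k)}\|$ and passing to a cluster point $\hat\alpha^*$ would produce a nonzero non-negative combination $G^*\hat\alpha^* = 0$, contradicting independence. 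Boundedness then permits a further limit $\alpha^*$, and the relation $G^{*,T}G^*\alpha^* = 0$ combined with linear independence forces $\alpha^* = 0$, which contradicts $\|G^*\alpha^*\| = \|\Delta\theta^*\| = 1$. Handling the possible unboundedness and drift of $\alpha^{(t)}$ under arbitrary preferences $p^{(t)}$ is the step I expect to require the most care, because nothing in the update rule directly controls $\|\alpha^{(t)}\|$.

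Finally, for convergence of the individual loss values, the two-sided $L$-smoothness inequality gives $\bigl|\ell_i(\theta^{(t+1)})-\ell_i(\theta^{(t)})+\mu^{(t)}p_i^{(t)}/\alpha_i^{(t)}\bigr|\le L(\mu^{(t)})^2/2$, whose right-hand side is summable by the second step. Because $\mu^{(t)}p_i^{(t)}/\alpha_i^{(t)}\ge 0$ and $\ell_i$ is bounded below, the non-negative series $\sum_t \mu^{(t)}p_i^{(t)}/\alpha_i^{(t)}$ also converges. Writing $\ell_i(\theta^{(t)})$ as a telescoping sum of its descent piece and an absolutely summable error piece then shows $\ell_i(\theta^{(t)})$ converges, and continuity along the subsequence $\theta^{(t_k)}\to\theta^*$ identifies the limit with $\ell_i(\theta^*)$, completing the proof.
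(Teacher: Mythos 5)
Your proposal is correct, and the first half (the unit-norm identity $\|\Dt^{(t)}\|^2=\sum_i p_i^{(t)}=1$, the descent inequality $\mathcal{L}(\theta^{(t+1)})\le\mathcal{L}(\theta^{(t)})-\tfrac{L}{2}(\mu^{(t)})^2$, and the resulting summability of $(\mu^{(t)})^2$ forcing $p_i^{(t)}/\alpha_i^{(t)}\to 0$) coincides with the paper's argument. Where you genuinely diverge is the Pareto-stationarity step. The paper argues directly along the whole sequence: it shows $\|\alpha^{(t)}\|\to\infty$ (using $\max_i p_i^{(t)}\ge 1/K$), bounds $\|p^{(t)}/\alpha^{(t)}\|=\|(G^{(t)})\T G^{(t)}\alpha^{(t)}\|$ by a uniform gradient bound $M$ on the compact sublevel set, and combines the two via $\|(G^{(t)})\T G^{(t)}\alpha^{(t)}\|\ge\sigma_K((G^{(t)})\T G^{(t)})\|\alpha^{(t)}\|$ to conclude $\sigma_K\to 0$, whence continuity of the smallest singular value gives linear dependence of the gradients at the limit point. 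You instead run a contradiction at the limit point with a dichotomy on the boundedness of $\alpha^{(t_k)}$: if unbounded, the normalized weights yield a nontrivial null combination $G^*\hat\alpha^*=0$; if bounded, $G^{*\top}G^*\alpha^*=0$ forces $\alpha^*=0$, contradicting $\|\Delta\theta^*\|=1$. Both routes are valid; yours avoids the singular-value machinery and the explicit bound on $\|p^{(t)}/\alpha^{(t)}\|$, needing only $p^{(t)}/\alpha^{(t)}\to 0$ and the unit-norm fact, at the cost of a case split. Your handling of the final claim is actually more careful than the paper's: the paper invokes continuity of $\ell_i$, which strictly speaking only gives convergence along the subsequence, whereas your two-sided smoothness bound plus the summability of both the nonnegative descent terms $\mu^{(t)}p_i^{(t)}/\alpha_i^{(t)}$ and the quadratic errors establishes convergence of each $\ell_i(\theta^{(t)})$ along the full sequence before identifying the limit.
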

See full proof in Appendix~\ref{app:proofs}.


\section{Experiments}

In this section, we compare \ourmethod{} with different approaches from multi-task and auxiliary learning. We use variety of datasets and learning setups to demonstrate the superiority of \ourmethod{}. To encourage future research and reproducibility, we make our source code publicly available \textcolor{magenta}{\url{https://github.com/AvivSham/auxinash}}. Additional experimental results and details are provided in Appendix~\ref{app:exp_details}.

\paragraph{Baselines.} We compare \ourmethod{} with natural baselines from recent auxiliary and multi-task learning works. The compared methods includes (1) Single-task learning (STL), which trains a model using the main task only; (2) Linear scalarization (LS) that minimizes the sum of losses $\sum_k\ell_k$; (3) GCS \citep{du2018adapting}, an auxiliary learning approach that uses gradient similarity between primary and auxiliary tasks; (4) OL-AUX \citep{lin2019adaptive}, an auxiliary learning approach that adaptively changes the loss weight based on the gradient inner product w.r.t the main task; (5) AuxiLearn \citep{NavonAMCF21}, an auxiliary learning approach that dynamically learns non-linear combinations of different tasks; (6) PCGrad \citep{yu2020gradient}, an MTL method that removes gradient components that conflict with other tasks; (7) CAGrad \citep{liu2021conflict}, an MTL method that optimizes for the average loss while explicitly controlling the minimum decrease rate across tasks; (8) Nash-MTL \citep{navon2022multi},  an MTL approach that is equivalent to \ourmethod{} but with a fixed $p_i=1/K$ weighting.

\paragraph{Evaluation} We report the common evaluation metrics for each task. Since MTL methods treat each task equally, and these may vary in scale, we also report the overall relative multi-task performance $\Delta\%$. $\Delta\%$ is defined as the performance drop compared to the STL performance. Formally, $\Delta\%=\frac{1}{K}\sum_{k=1}^{K}(-1)^{\delta_k}(M_{m,k} - M_{b,k} )/ M_{b,k}$. We denote $M_{b,k}$ and $M_{m,k}$ as the performance of STL and the compared method on task $k$, respectively. $\delta_k=0$ if a lower value is better for the metric $M_k$ and $1$ otherwise \citep{maninis2019attentive}. In all experiments, we report the mean value based on $3$ random seeds.    

It is important to note that for MTL models, we present the results of a single model trained on all tasks. For auxiliary learning methods, we trained a unique model per task, treating it as the main task and using the remaining tasks as auxiliaries.

\subsection{Illustrative Example}\label{sec:illus}

We start with an illustrative example, showing that \ourmethod{} can utilize helpful auxiliaries while ignoring harmful ones. 

We adopt a similar problem setup as in~\citet{NavonAMCF21} and consider a regression problem with parameters $W^T=(w_1, w_2)\in\RR^2$, fully shared among tasks. The optimal parameters for the main and helpful auxiliary tasks are $W^\star$, while the optimal parameters for the harmful auxiliary are $\tilde{W}\neq W^\star$. The main task is sampled from a Normal distribution $N({W^\star}^T x, \sigma_{\text{main}})$, with $\sigma_{\text{main}} > \sigma_{\text{h}}$ where $\sigma_{\text{h}}$ denotes the standard deviation for the noise of the helpful auxiliary. 

The change in the task preference throughout the optimization process is depicted in the left panel of Figure~\ref{fig:illustrative}. \ourmethod{} identify the helpful tasks and fully ignore the harmful ones. In addition, 
Figure~\ref{fig:illustrative} right panel presents the main task's loss landscape, along with the optimal solution ($W^\star$, marked 	$\blacktriangle$), the optimal training set solution of the main  task alone (${\scriptstyle \blacksquare}$) and the solution obtained by \ourmethod{} (marked ${\small\CIRCLE}$). While using the training data alone with no auxiliary information yields a solution that generalizes poorly, \ourmethod{} converges to a solution with large proximity to the optimal solution $W^\star$, 

\begin{table}[!t]
\setlength{\tabcolsep}{2pt}
    \centering
    \caption{\textit{Cityscapes}. Test performance for three tasks: 19-class semantic segmentation, 10-class part segmentation, and disparity. }
    \vskip 0.11in
\scriptsize
\begin{tabular}{lcccccccccccc}
\toprule\\
 &  &  & \multicolumn{2}{c}{Semantic Seg.} &  & \multicolumn{2}{c}{Part Seg.} &  & {Disparity}  \\
 \cmidrule(lr){4-5} \cmidrule(lr){7-8} \cmidrule(lr){10-10}
 &  &  & mIoU $\uparrow$ & Pix Acc $\uparrow$ & & mIoU $\uparrow$ & Pix Acc $\uparrow$  &  &  Abs Err $\downarrow$  & $\mathbf{\Delta \%} \downarrow$ \\
 \midrule
 & \multicolumn{2}{c}{STL} & $48.64$ & $91.01$ &  & $53.60$ & $97.62$ &  & $1.108$ &  \\
  \midrule
 & \multicolumn{2}{c}{LS} & $37.66$ & $88.63$ &  & $40.92$ & $96.98$ &  & $1.105$ & $~~~9.84$ \\
 & \multicolumn{2}{c}{PCGrad} & $39.10$ & $89.31$ &  & $41.71$ & $97.14$ &  & $1.133$ & $~~~9.28$ \\
& \multicolumn{2}{c}{CAGrad} & $39.45$ & $89.04$ &  & $51.95$ & $97.54$ &  & $1.098$ & $~~~4.66$ \\
 & \multicolumn{2}{c}{Nash-MTL} & $51.14$ & $91.59$ &  & $56.99$ & $97.87$ &  & $1.066$ & $-3.23$ \\
 \midrule
 
 & \multicolumn{2}{c}{GCS} & $37.45$ & $88.62$ && $41.14$ & $96.97$ &&  $1.124$ & $~10.19$\\
 & \multicolumn{2}{c}{OL-AUX} & $27.63$ & $89.34$ && $51.12$ & $97.52$ && $1.397$ & $~15.16$\\
 & \multicolumn{2}{c}{AuxiLearn} & $36.18$ & $88.24$ && $40.51$ & $96.95$ && $1.141$ & $~11.3$ \\
 \midrule
 & \multicolumn{2}{c}{\ourmethod{} } & $\mathbf{52.52}$ & $\mathbf{91.91}$ &  & $\mathbf{58.53}$ & $\mathbf{97.93}$ &  & $\mathbf{1.027}$ & $\mathbf{-5.15}$ \\
 \bottomrule
\end{tabular}%
\label{tab:cityscapes}
\end{table}

\subsection{Controlling Task Preference} \label{sec:multimnist}
In this section, we wish to better understand the relationship between the preference $p$ and the obtained solution.  We note the preference vector only implicitly affects the optimization solution through the bargaining solution $\alpha$.

Here, we show that controlling the preference vector can be used to steer the optimization outcome to different parts of the Pareto front, compared to the NashMTL baseline. 
We consider MTL setup with 2 tasks and use the Multi-MNIST \citep{sabour2017dynamic} dataset. In Multi-MNIST two images from the original MNIST dataset are merged into one by placing one at the top-left corner and the other at the bottom-right corner. The tasks are defined as image classification of the merged images. 
We run \ourmethod{} $11$ times with varying preference vector values $p$ and fix it throughout the training. For both tasks we report the classification accuracy. For Nash-MTL we run the experiments with different seed values. For both methods we train a variant of LeNet model for $50$ epochs with Adam optimizer and $1e-4$ as the learning rate. 

Figure~\ref{fig:multimnist} shows the results. \ourmethod{} reaches a diverse set of solutions across the Pareto front while Nash-MTL solutions are all relatively similar due to its symmetry property. 

\subsection{Analyzing the Effect of Auxiliary Set}\label{sec:aux-set}


\begin{table}[h]
\centering
    \caption{\textit{The effect of auxiliary set:} We report the mean IoU, along with the $\%$ change w.r.t STL performance.}
    \vskip 0.11in
\scriptsize
\begin{tabular}{lcccc}
\toprule
 &  \multicolumn{2}{c}{Cityscapes} & \multicolumn{2}{c}{NYUv2} \\
\cmidrule(lr){2-3} \cmidrule(lr){4-5}
 &  mIoU $\uparrow$  & Change $\%$ $\uparrow$ &  mIoU $\uparrow$  & Change $\%$ $\uparrow$ \\
\midrule
STL & $48.64$ &  & $38.30$ \\ 
STL Partial & $45.97$  & $-5.48$ &  $36.54$ &  $-4.59$\\
\midrule
\ourmethod{} &  $52.52$ & $~~~7.97$ & $40.79$ & $~~~6.50$ \\
\ourmethod{} Aux. Set & $51.81$  & $~~~6.51$ & $38.94$ & $~~~1.67$ \\
\bottomrule
\end{tabular}
\label{tab:aux_set}
\end{table}

One important question is on what data to evaluate the generalization loss $\mathcal{L}_V$. It would seem intuitive that one would need a separate validation set for that since estimating $\mathcal{L}_V$ on the training data may be biased. In practice, some previous works use a held-out auxiliary set \citep{NavonAMCF21}, while others use a separate batch from the training set \citep{liu2019self,liu2022auto}. While using an auxiliary set might be more intuitive, it requires reducing the available amount of training data which can be detrimental in the low-data regime we are interested in. 

We empirically evaluate this using the NYUv2~\citep{silberman2012indoor} and Cityscapes~\citep{Cordts2016Cityscapes} datasets. See Section~\ref{sec:scene-under} for more details. We choose semantic segmentation as the main task for both datasets. We compare the following methods (i) \textit{STL}: single task learning using the main task only, (ii) \textit{STL Partial}: STL using only $90\%$ of the training data, (iii) \textit{\ourmethod{}}: our proposed method where we optimize the preference vector using the entire training set, (iv) \textit{\ourmethod{} Aux. Set}: our proposed method, where we optimize the preference vector using $10\%$ of the data, allocated from the training set.

We report the mean-IoU metric (higher is better) along with the relative change from the performance of the STL method. The results suggest that the drawback of sacrificing some of the training data overweighs the benefit of using an auxiliary set. This result aligns with the observation in~\citet{liu2022auto}.

\subsection{Scene Understanding}\label{sec:scene-under}

We follow the setup from \citet{Liu2019EndToEndML, liu2022auto, navon2022multi} and evaluate \ourmethod{} on the NYUv2 and Cityscapes datasets \citep{silberman2012indoor, Cordts2016Cityscapes}. 
The indoor scene NYUv2 dataset \citep{silberman2012indoor} contains 3 tasks: 13 classes semantic segmentation, depth estimation, and surface normal prediction. The dataset consists of $1449$ RGBD images captured from diverse indoor scenes. 

We also use the Cityscapes dataset \cite{Cordts2016Cityscapes} with 3 tasks (similar to~\citet{liu2022auto}): 19-class semantic segmentation, disparity (inverse depth) estimation, and 10-class part segmentation~\citep{de2021part}. To speed up the training phase, all images and label maps were resized to $128 \times 256$. 

For all methods, we train SegNet \citep{badrinarayanan2017segnet}, a fully convolutional model based on VGG16 architecture. We follow the training and evaluation procedure from \citet{Liu2019EndToEndML, liu2022auto, navon2022multi} and train the network for 200 epochs with Adam optimizer \citep{Kingma2015AdamAM}. We set the learning rate to $1e-4$ and halved it after $100$ epochs. 

The results are presented in Table~\ref{tab:nyu} and Table~\ref{tab:cityscapes}. Observing the results, we can see our approach \ourmethod{} outperforms other approaches by a significant margin. It is also important to note that several methods achieve a positive $\%\Delta$ score, meaning they performed worse than simply ignoring the auxiliary tasks and training on the main task alone. We believe this is due to the difficulties presented by optimizing with multiple objectives. 


\subsection{Semi-supervised Learning with SSL Auxiliaries}

\begin{table}[h]
    \setlength{\tabcolsep}{3pt}
    \centering
    \caption{\textit{CIFAR10-SSL}. Test performance for classification with a varying number of labeled data. Values are averages over 3 random seeds.}
    \vskip 0.11in
    \resizebox{0.95\linewidth}{!}{%
    \begin{tabular}{lcc}
    \toprule
    & CIFAR10-SSL-5K & CIFAR10-SSL-10K \\
    \midrule
    STL & $79.31 \pm 0.31$ & $83.75 \pm 0.18$\\
    \midrule
    LS & $83.17 \pm 0.54$ & $86.16\pm 0.39$ \\
    PCGrad & $82.71 \pm 0.16$ &  $86.17\pm 0.34$ \\
    CAGrad & $85.89\pm 0.63$  & $87.82 \pm 0.28$    \\
    Nash-MTL & $\mathbf{86.69\pm 0.14}$ & $\mathbf{88.68 \pm 0.14}$  \\
    \midrule
    GCS & $83.09\pm 0.34$ & $86.47\pm 0.56$  \\
    OL-AUX & $81.44 \pm 1.06$ & $85.49 \pm 0.73$ \\
    AuxiLearn & $82.83 \pm 0.57$ & ${85.52 \pm 0.57}$ \\
    \midrule
    \ourmethod{} (ours) & $\mathbf{87.01 \pm 0.52}$ & $\mathbf{88.81 \pm 0.34}$ \\
    \bottomrule
    \end{tabular}
    }
    \label{tab:cifar_ssl}
\end{table}

In semi-supervised learning one generally trains a model with a small amount of labeled data, while utilizing self-supervised tasks as auxiliaries to be optimized using unlabeled training data. 

We follow the setup from \citet{shi2020auxiliary} and evaluate \ourmethod{} on a Self-supervised Semi-supervised Learning setting \citep{zhai2019s4l}. We use CIFAR-10 dataset to form 3 tasks. We set the supervised classification as the main task along with two self-supervised learning (SSL) tasks used as auxiliaries: 
(i) Rotation (ii) Exempler-MT. In Rotation, we randomly rotate each image by $[0^{\circ}, 90^{\circ}, 180^{\circ}, 270^{\circ}]$ and optimize the network to predict the angle. In Exempler-MT we apply a combination of three transformations: horizontal flip, gaussian noise, and cutout. Similarly to contrastive learning, the model is trained to extract invariant features by encouraging the original and augmented images to be close in their feature space. For the supervised task we randomly allocate samples from the training set. We repeat this experiment twice with $5K$ and $10K$ labeled training examples.
The results are presented in Table~\ref{tab:cifar_ssl}. \ourmethod{} significantly outperforms most baselines.

\subsection{Audio Classification}

\begin{table}[h]
\centering
\centering
    \caption{\textit{Speech Commands}. Test accuracy for speech classification, for models trained with $1000$ and $500$ training examples.}
    \vskip 0.11in
\begin{tabular}{lcc}
\toprule
& SC-500 &  SC-1000 \\
\midrule
STL   & $95.8 \pm 0.1 $ & $96.4 \pm 0.1$ \\
\midrule
LS  & $95.7 \pm 0.2$ & $96.7 \pm 0.1 $ \\
PCGrad  &  $95.7 \pm 0.1$ & $96.7 \pm 0.1 $ \\
CAGrad  & $95.7 \pm 0.2 $   & $95.7 \pm 0.1 $ \\
Nash-MTL  & $95.7 \pm 0.2 $  & $96.6 \pm 0.3 $\\
\midrule
GCS  & $96.3 \pm 0.1$  & $96.9 \pm 0.1 $\\
OL-AUX  & $96.2 \pm 0.2$ & $96.9 \pm 0.1 $\\
AuxiLearn  & $96.0 \pm 0.1 $ & $97.0 \pm 0.1 $\\
\midrule
\ourmethod{} (ours) & $\mathbf{96.4 \pm 0.1}$ & $\mathbf{97.2 \pm 0.1}$\\
\bottomrule
\end{tabular}
\label{tab:sc}
\end{table}

We evaluate AuxiNash on the speech commands (SC) dataset \citep{warden2018speech}, which consists of $\sim50$K speech samples of specific keywords. The data contains 30 different keywords, and each speech sample is one second long. We use a subset of the SC containing audio samples for only the 10 numbering keywords (zero to nine). As a pre-processing step, we use a short-time Fourier transform (STFT) to extract a spectrogram for each example, which we then fed to a convolutional neural network (CNN). We evaluate \ourmethod{} on 10 one-vs-all binary classification tasks. 
We repeat the experiment with a training set of sizes 500 and 1000. The results are presented in Table~\ref{tab:sc}.

\subsection{Learned Preferences}

\begin{figure*}[h]
    \centering
    \includegraphics[width=0.9\linewidth]{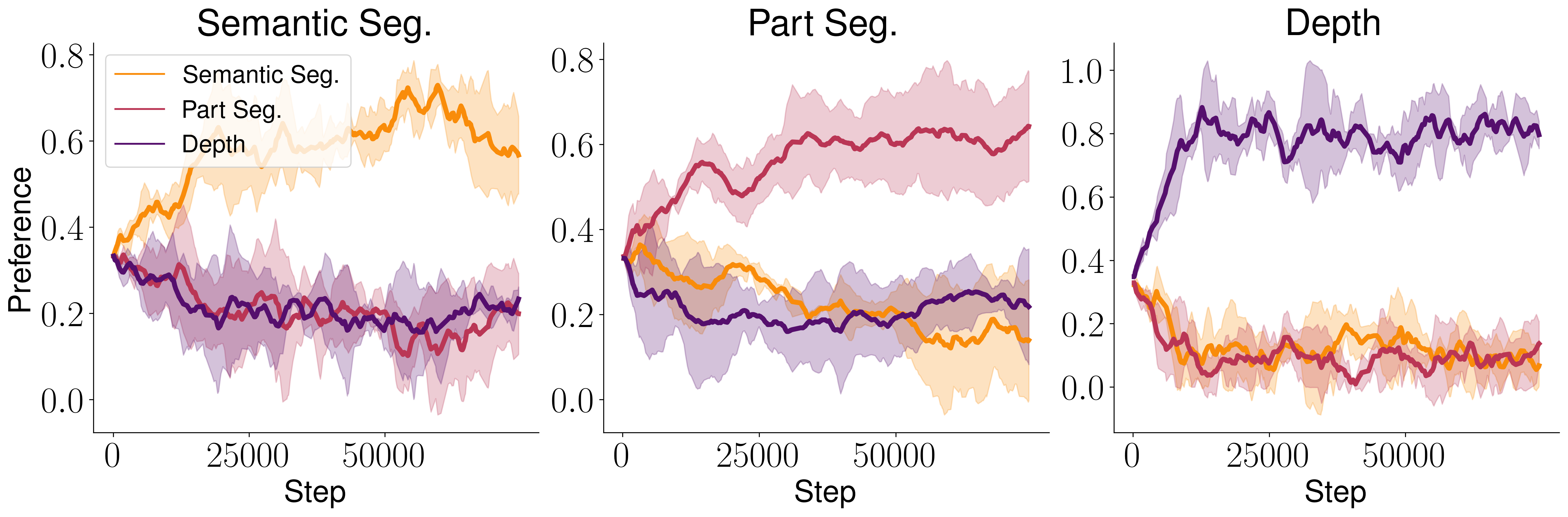}
    \caption{Learned task preference for Cityscapes dataset. The title of each panel indicates the main task.}
    \label{fig:pref}
\end{figure*}

To better understand the learned preference and its dynamics, we observe the change in preference vectors throughout the optimization process. We use the scene understanding experiment of Section~\ref{sec:scene-under}, specifically the Cityscapes dataset. The learned preferences are presented in Figure~\ref{fig:pref}. The title of each panel indicates the main task. Here, \ourmethod{} learns preferences that are aligned with our intuition, i.e., the main task's preference is the largest. 

\section{Limitations}
Auxiliary learning methods, while valuable in various domains, are not without their limitations. One potential limitation is designing effective auxiliary tasks that truly capture the underlying structure and knowledge of the main task can be challenging. Another limitation is the potential for negative transfer. If the auxiliary task is not sufficiently aligned with the main task, the learned representations may not be beneficial for improving performance. Additionally, auxiliary learning methods may suffer from scalability issues when dealing with large-scale datasets or complex problems. Finally, auxiliary learning methods might be sensitive to the choice of hyperparameters, architecture, or training procedures, making them less robust and generalizable across different tasks and datasets. Overcoming these limitations remains an ongoing research endeavor to unlock the full potential of auxiliary learning methods.

\section{Conclusion and Future Work}\label{sec:con}
In this work, we formulate  auxiliary learning as an asymmetric bargaining game and use game-theoretical tools to derive an efficient algorithm. We adapt and generalize recent advancements in multi-task learning to auxiliary learning and show how they can be automatically tuned to get a significant improvement in performance. 

We evaluated \ourmethod{} on multiple datasets with different learning setups and show that it outperforms previous approaches by a significant margin. Across all experiments, it is noticeable that MTL methods perform better than auxiliary learning ones although the former treat equally the primary task and the auxiliary tasks. We suspect that this is caused by conflicting gradients and by the fact that gradient norms may vary significantly across tasks. 
These results emphasize the connection between auxiliary learning and multi-task optimization. In many examples, the benefit of the auxiliary task was diminished or even completely negated by poor optimization. Thus, we suggest that auxiliary learning research should be closely aligned with MTL optimization research to effectively utilize auxiliary tasks.   

\section{Acknowledgements}

This study was funded by a grant to GC from the Israel Science Foundation (ISF 737/2018), and by
an equipment grant to GC and Bar-Ilan University from the Israel Science Foundation (ISF 2332/18).
AN and AS are supported by a grant from the Israeli higher-council of education, through the Bar-Ilan data science institute (BIU DSI). 

\bibliography{ref}
\bibliographystyle{icml2023}

\newpage
\appendix
\onecolumn
\section{Nash Bargaining Solution Details}
 The set $U$ is assumed to be convex and compact. Furthermore, we assume that there exists a point $u\in U$ for which $\forall i:u_i>d_i$. 
 \citet{nash} showed that under these assumptions, the two-player bargaining problem has a unique solution
that satisfies the following properties or axioms
: Pareto optimality, symmetry, independence of irrelevant alternatives, and invariant to affine transformations. This was later extended to multiple players \citep{game_theory}.
\begin{axiom}
\textbf{Pareto optimality:} The agreed solution must not be dominated by another option, i.e. there cannot be any other agreement that is better for at least one player and not worse for any of the players.
\end{axiom}
\begin{axiom}
\textbf{Symmetry:} The solution should be invariant to permuting the order of the players.
\end{axiom}

\begin{axiom}
\textbf{Independence of irrelevant alternatives (IIA):} If we enlarge the of possible payoffs to 
$\tilde{U}\supsetneq U$, and the solution is in the original set $U$, $u^*\in U$, then the agreed point when the set of possible payoffs is $U$ will stay $u^*$.
\end{axiom}
\begin{axiom}
\textbf{Invariance to affine transformation:} If we transform each utility function $u_i(x)$ to $\tilde{u}_i(x)=c_i\cdot u_i(x)+b_i$ with $c_i>0$ then if the original agreement had utilities $(y_1,...,y_k)$ the agreement after the transformation has utilities $(c_1y_1+b_1,...,c_ky_k+b_k)$
\end{axiom}
\section{Proofs}\label{app:proofs}

\begin{proof}[Proof of Proposition \ref{prop:dalpha-dp}]
Define a function $F(\bp,\balpha)=G\T G \balpha - \bp/\balpha \in \RR^K$ where and  $\bp \in \RR_{+}^K$ and $\balpha\in \RR_{+}^K$ are independent variables of $F$ with $\RR_{+}$ being the set of strictly positive real numbers. Here,  $\bp/\balpha$ represents the coordinate-wise operation, i.e., $\bp/\balpha \in \RR^K$ with $(\bp/\balpha)_i=\bp_{i}/\balpha_i \in \RR$ for $i\in[K]$. Then, we have
\begin{align*}
\frac{\partial F(\bp,\balpha)}{\partial \balpha}\Big\vert_{(\bp,\balpha)=(p,\alpha)}=G\T G - \frac{\partial (\bp/\balpha)}{\partial \balpha} \Big\vert_{(\bp,\balpha)=(p,\alpha)}=G\T G+\Lambda_0,
\end{align*}
where the last equality follows from $\frac{\partial \bp/\balpha}{\partial \balpha}=-\Lambda_0$ since   
$$
\frac{\partial( \bp/\balpha)_{i}}{\partial \balpha_{j}}= \begin{cases}\frac{\partial( \bp_{j}/\balpha_{j})_{}}{\partial \balpha_{j}} & \text{if } i=j\\
0 & \text{otherwise} \\
\end{cases} = \begin{cases}-\bp_i/\balpha_{i}^2  & \text{if } i=j\\
0 & \text{otherwise} \\
\end{cases}
$$  
Similarly, 
\begin{align*}
\frac{\partial F(\bp,\balpha)}{\partial \bp}\Big\vert_{(\bp,\balpha)=(p,\alpha)}= - \frac{\partial (\bp/\balpha)}{\partial \bp} \Big\vert_{(\bp,\balpha)=(p,\alpha)}=-\Lambda_1,
\end{align*}
since 
$$
\frac{\partial( \bp/\balpha)_{i}}{\partial\bp_{j}}= \begin{cases}\frac{\partial( \bp_{j}/\balpha_{j})_{}}{\partial \bp_{j}} & \text{if } i=j\\
0 & \text{otherwise} \\
\end{cases} = \begin{cases}1/\balpha_{i}  & \text{if } i=j\\
0 & \text{otherwise} \\
\end{cases}
$$
Thus, $F$ is continuously differentiable and $\frac{\partial F(\bp,\balpha)}{\partial \balpha}\vert_{(\bp,\balpha)=(p,\alpha)}=G\T G+\Lambda_0$ is invertible since $G\T G$ is positive semi-definite and $\Lambda_0$ is positive definite due to the condition of $p_i>0$ for all $i \in [K]$. Therefore, the function $F$  at  $(p,\alpha)$ satisfies the condition of the implicit function theorem, which implies  the  statement of this proposition as 
\begin{align}
\begin{split}
\frac{\partial \halpha(p)}{\partial p}=-\left[\frac{\partial F(\bar{p},\balpha)}{\partial \bar{\alpha}}\right]^{-1} \frac{\partial F(\bar{p},\balpha)}{\partial \bar{p}}\Big\vert_{(\bar{p},\balpha)=(p,\alpha)}
=\left[G\T G+\Lambda_0\right]^{-1} \Lambda_1. 
\end{split}
\end{align}

\end{proof}

\subsection{Proof of Theorem \ref{th:novergence}}

We adopt the following implicit assumption (or the design of algorithm) from \citep{navon2022multi}: if we reach a Pareto stationary solution at some step, the algorithm halts. We will also use the following Lemma.

\begin{lemma}\label{Kenji_lemma}
(Lemma A.1 of \citealp{navon2022multi})\ If $\mathcal{L}$ is differential and L-smooth (assumption \ref{assump:L-smooth}) then 
$\mathcal{L}(\theta')\leq \mathcal{L}(\theta)+\nabla\mathcal{L}(\theta)\T (\theta'-\theta)+\frac{L}{2}\|\theta'-\theta\|^{2}$.
\end{lemma}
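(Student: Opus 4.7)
The plan is to prove this standard descent lemma for $L$-smooth functions via the fundamental theorem of calculus along the line segment joining $\theta$ and $\theta'$. Since Assumption~\ref{assump:diff} guarantees the input domain is open and convex, the segment $\{\theta + t(\theta'-\theta) : t\in [0,1]\}$ lies entirely within the domain, so $\mathcal{L}$ is differentiable along it and the relevant path integral is well-defined.

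First, I would apply the fundamental theorem of calculus to the scalar function $t\mapsto \mathcal{L}(\theta + t(\theta'-\theta))$ on $[0,1]$ to write $\mathcal{L}(\theta')-\mathcal{L}(\theta) = \int_0^1 \nabla\mathcal{L}(\theta + t(\theta'-\theta))^\top (\theta'-\theta)\,dt$. Then, adding and subtracting $\nabla\mathcal{L}(\theta)^\top(\theta'-\theta)$ inside the integral isolates the linear term $\nabla\mathcal{L}(\theta)^\top(\theta'-\theta)$ and reduces the problem to bounding the remainder $R := \int_0^1 [\nabla\mathcal{L}(\theta + t(\theta'-\theta))-\nabla\mathcal{L}(\theta)]^\top (\theta'-\theta)\,dt$.

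The remainder is controlled in two routine steps. Pointwise in $t$, the Cauchy--Schwarz inequality yields $[\nabla\mathcal{L}(\theta + t(\theta'-\theta))-\nabla\mathcal{L}(\theta)]^\top (\theta'-\theta) \leq \|\nabla\mathcal{L}(\theta + t(\theta'-\theta))-\nabla\mathcal{L}(\theta)\|\cdot\|\theta'-\theta\|$, and Assumption~\ref{assump:L-smooth} bounds the first factor by $L t\|\theta'-\theta\|$ since the points $\theta + t(\theta'-\theta)$ and $\theta$ differ by exactly $t(\theta'-\theta)$. Integrating $\int_0^1 L t\,dt = L/2$ produces the quadratic term $R\leq (L/2)\|\theta'-\theta\|^2$, and combining with the linear term gives the claim.

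There is no genuine obstacle here: the argument is a textbook calculation relying only on differentiability (to invoke the fundamental theorem of calculus), convexity of the domain (so the segment is admissible), and the $L$-smoothness hypothesis (to bound the gradient deviation). The only care required is to explicitly invoke Assumption~\ref{assump:diff} so that the path integral is meaningful throughout $[0,1]$; otherwise the computation is a direct one-line integration once Cauchy--Schwarz and $L$-smoothness have been applied.
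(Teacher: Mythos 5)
Your proof is correct and is the standard descent-lemma argument (fundamental theorem of calculus along the segment, Cauchy--Schwarz, then integrating $Lt$ to get $L/2$), which is exactly the argument behind the cited Lemma A.1 of \citet{navon2022multi}; the paper itself does not reprove it but relies on that same standard derivation. Your extra care in invoking Assumption~\ref{assump:diff} (openness and convexity of the domain) to justify the path integral is a sensible addition and introduces no discrepancy.
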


\begin{proof}
We first note that if for some step, we reach a Pareto stationary solution the algorithm halts and sequence stays fixed at that point and therefore converges; Next, we assume that we never get to an exact Pareto stationary solution at any finite step.
Since $\sum_{i=1}^K p_{i}^{(t)}=1$ and  $\Dt^{(t)}=G^{(t)}\alpha^{(t)}$,
$$
||\Dt^{(t)}||^2=(\alpha^{(t)})\T ( G^{(t)})\T G^{(t)}\alpha^{(t)}=(\alpha^{(t)})\T(p^{(t)}/\alpha^{(t)})=\sum_{i=1}^K\alpha_i^{(t)}\cdot (p_{i}^{(t)}/\alpha_i^{(t)})=1.
$$ 
For each loss $\ell_i$  for $i\in[K]$,  using Lemma \ref{Kenji_lemma}
 and $(g_i^{(t)})\T\Dt^{(t)}=(g_i^{(t)})\T G^{(t)}\alpha^{(t)}=p_i^{(t)}/\alpha_i^{(t)}$,

\begin{align} 
    \ell_i(\theta^{(t+1)})&\leq \ell_i(\theta^{(t)})-  \mu^{(t)}\nabla\ell_i(\theta^{(t)})\T\Dt^{(t)}+\frac{L}{2}||\mu^{(t)}\Dt^{(t)}||^2
    \\ &= \ell_i(\theta^{(t)})- \mu^{(t)}\frac{p_i^{(t)}}{\alpha^{(t)}_i}+\frac{(\mu^{(t)})^2 L}{2}.
\end{align}\\ 
We average over the above inequality over all losses and get for $\mathcal{L}(\theta)=\frac{1}{K}\sum_{i=1}^K\ell_i(\theta)$:
\begin{align}
    &\mathcal{L}(\theta^{(t+1)})\leq \mathcal{L}(\theta^{(t)})- \mu^{(t)}\frac{1}{K}\sum_{i=1}^K\frac{p_i^{(t)}}{\alpha^{(t)}_i}+\frac{(\mu^{(t)})^2L}{2}=
    \mathcal{L}(\theta^{(t)})-L(\mu^{(t)})^2+\frac{(\mu^{(t)})^2L}{2}
    =\mathcal{L}(\theta^{(t)})-\frac{L(\mu^{(t)})^2}{2}. 
\end{align}
By rearranging, this shows that  $\mathcal{L}(\theta^{(t+1)}) \le\mathcal{L}(\theta^{(t)}) $ and   $\frac{L(\mu^{(t)})^2}{2}\le \mathcal{L}(\theta^{(t)})-\mathcal{L}(\theta^{(t+1)})$. From the first inequality,  the sequence   $\mathcal{(L}(\theta^{(t)}))_t$ is  non-increasing. As $\mathcal{L}(\theta^{(t)}) \in \RR$ is bounded below and $\mathcal{(L}(\theta^{(t)}))_t$ is non-increasing, the monotone convergence theorem concludes that the sequence $\mathcal{(L}(\theta^{(t)}))_t$ converges to a finite limit. Since a convergent sequence is a Cauchy sequence, $\frac{L(\mu^{(t)})^2}{2}\le \Lcal(\theta_{}^{t})-\Lcal(\theta_{}^{t+1}) \rightarrow 0$  as $t \rightarrow \infty$.
This implies that  $\mu^{(t)}\to 0$ as $t \rightarrow \infty$.
It follows that $\frac{1}{K}\sum_{i=1} ^{K}p_i^{(t)}/\alpha^{(t)}_i\to0$ as $t \rightarrow \infty$.
 Since $p_i^{(t)}>0$ and $\alpha^{(t)}_i>0$, it  implies that $\max_i p_i^{(t)}\geq 1/K$. If we define $j_t=\arg\max_i p_i^{(t)}$ we will get that ${p^{(t)}_{j_t}}/{\alpha^{(t)}_{j_t}}\rightarrow 0$ and therefore $\alpha^{(t)}_{j_t}\rightarrow \infty$. We can conclude that $||\alpha^{(t)}||\rightarrow\infty$.

We will now show that $||p^{(t)}/\alpha^{(t)}||$ is bounded for $t\to\infty$. As the sequence $\mathcal{L}(\theta^{(t)})$ is decreasing we have that the sequence $\theta^{(t)}$ is in the sublevel set $\{\theta:\mathcal{L}(\theta)\leq \mathcal{L}(\theta_0)\}$ which is closed and bounded and therefore compact. If follows that there exists $M<\infty$ such that $||g^{(t)}_i||\leq M$ for all $t$ and $i\in[K]$. We have for all $i$ and $t$, 
$|p_i^{(t)}/\alpha^{(t)}_i|=|(g^{(t)}_i)^T\Dt^{(t)}|\leq||g^{(t)}_i||\leq M <\infty$, and so $||p^{(t)}/\alpha^{(t)}||$ is bounded. 
Combining these two results we have $||p^{(t)}/\alpha^{(t)}||\geq \sigma_K((G^{(t)})\T G^{(t)})||\alpha^{(t)}||$ where $\sigma_K((G^{(t)})\T G^{(t)})$ is the smallest singular value of $(G^{(t)})\T G^{(t)}$.
Since the norm of $\alpha^{(t)}$ goes to infinity and the norm $p^{(t)}/\alpha^{(t)}$ is bounded, it follows that $\sigma_K((G^{(t)})\T G^{(t)})\to 0$.

Now, since $\{\theta : \Lcal(\theta)\leq \Lcal({\theta_0})\}$ is compact there exists a subsequence $\theta^{(t_j)}$ that converges to some point $\theta^*$. As $\sigma_K((G^{(t)})^T G^{(t)})\rightarrow 0$ we have from continuity that $\sigma_K(G_*\T G_*)=0$ where $G_*$ is the matrix of gradients at $\theta^*$. This means that the gradients at $\theta^*$ are linearly dependent and therefore $\theta^*$ is Pareto stationary by assumption \ref{assump:independence}. Since all loss functions  $\ell_i$ are the are differentiable, all loss functions $\ell_i$ are continuous. Thus, we have from the continuity of $\ell_i$ that  $\ell_i(\theta^{(t)})\xrightarrow{t\rightarrow\infty}\ell_i(\theta^*)$.
\end{proof}

\section{Solving $\mathbf{G\T G\alpha }=p/\alpha$}\label{app:solving-the-game}
Here, we describe how to efficiently approximate the optimal solution for $G\T G\alpha=p/\alpha$ through a sequence of convex optimization problems. We define a $\beta_i(\alpha)=g_i\T G\alpha$, and wish to find $\alpha$ such that $\alpha_i=p_i/\beta_i$ for all $i$, or equivalently $\log(\alpha_i)+\log(\beta_i(\alpha))-\log(p_i)=0$. Denote $\varphi_i(\alpha)=\log(\alpha_i)+\log(\beta_i)-\log(p_i)$ and $\varphi(\alpha)=\sum_i \varphi_i(\alpha)$. 
With that, our goal is to find a non-negative $\alpha$ such that $\varphi_i(\alpha)=0$ for all $i$. We can write this as the following optimization problem
\begin{align}
    \min_{\alpha} &\sum_i \varphi_i(\alpha) \\
    \text{s.t.} \forall i, \quad & -\varphi_i(\alpha) \leq 0 \nonumber\\
    &\alpha_i > 0 \nonumber \quad.
\end{align} 
One can see that the constraints of this problem are convex and linear while the objective is concave. To overcome this challenge we follow ~\citet{navon2022multi} and use the CCP to modify the concave objective into sequence of convex optimization problems. For further details please refer to Section 3.2 in ~\citet{navon2022multi}.

\section{Experimental Details} \label{app:exp_details}
Unless stated otherwise for \ourmethod{} we update the preference vector $p$ every $25$ optimization steps using SGD optimizer with momentum of $0.9$ and learning rate of $5e-3$.

\subsection{Illustrative Example}

We follow a similar setup as in~\citet{NavonAMCF21}. We consider a regression problem with parameters $W^T=(w_1, w_2)\in\RR^2$, shared among tasks, with no task-specific parameters. The optimal parameters for the main and helpful auxiliary tasks are ${W^\star}^T=(1, 1)$, while the optimal parameters for the harmful auxiliary are $\tilde{W}^T=(-1, -4)$. The main task is sampled from a Normal distribution $N({W^\star}^T x, \sigma_{\text{main}})$, with $\sigma_{\text{main}} = 20\cdot \sigma_{\text{h}}$ where $\sigma_{\text{h}}=0.25$ denotes the standard deviation for the noise of the helpful auxiliary. We use $1000$ training and train the model using Adam optimizer and learning-rate $1e-2$ and batch-size of $256$ for $1000$ epochs.

\subsection{CIFAR10-SSL}
We follow the training and evaluation procedure proposed by previous works \citep{shi2020auxiliary}. We use the CIFAR10 dataset and divide the dataset to train/val/test splits each containing $45K$/$5K$/$10K$ respectively. We allocate $5K$/$10K$ samples as our labeled dataset for the supervised main task. Following \citet{shi2020auxiliary} we use the wide resnet (WRN-28-2) architecture which is ResNet with depth 28 and width 2. We train WRN for $50K$ iterations using Adam optimizer with learning rate of $5e-3$ and $256$ batch size. Lastly, we use the main task performance on the validation set for early stopping.

\subsection{Speech Commands}
We used the Speech Command dataset, an audio dataset of spoken words designed to train and evaluate keyword spotting systems. We repeat the experiment twice with $1000$ and $500$ training samples distributed uniformly across $10$ classes. For validation and test we used the original dataset split containing $3643$ and $4107$ samples respectively. For all methods we use a CNN network with 3 Convolution layers with a linear layer as the classifier. We train the model for 200 epochs with Adam optimizer, and a learning rate of $1e-3$. We present the per-task results for both experiments in Table~\ref{tab:sc_500} and Table~\ref{tab:sc_1000}.

\begin{table}[]
\setlength{\tabcolsep}{3pt}
\small
    \centering
    \caption{\textit{Speech Commands}. Test performance on each one of the digit keywords. The train data-set consist of 1000 samples, uniformly distributed between the 10 classes. Values are averaged over 3 random seeds.}
    \vskip 0.11in
\begin{tabular}{lccccccccccc}
\midrule 
\multicolumn{12}{c}{Speech Commands - 1000} \\
\midrule
                  & 0     &  1     &  2     &  3     &  4     & 5    & 6     & 7     & 8     & 9     & \multicolumn{1}{l}{Mean} \\ \midrule
STL      & 97.0           & 96.7          & 95.8          & 96.3          & 96.4          & 96.3         & 96.7          & 96.6          & 96.0           & 96.6          & 96.4                            \\ \midrule
LS       & 97.0           & 97.2          & 96.2          & 97.1          & 97.0           & 96.2         & 96.9          & 97.3          & 95.8          & 96.8          & 96.7                            \\
PCGrad   & 97.0           & 97.3          & 96.2          & 96.6          & 96.7          & 96.7         & 96.8          & 97.3          & 95.8          & 96.9          & 96.7                            \\
CAGrad  & 97.2          & 97.7          & 95.6          & 97.0           & 97.0           & 96.6         & 97.0           & 97.2          & 95.8          & 97.0           & 96.8                            \\
Nash-MTL & 97.1          & 96.8          & 96.4          & 96.6          & 96.9          & 96.6         & 96.8          & 96.9          & 96.0           & 96.8          & 96.6                            \\ \midrule
GCS      & 97.3          & 97.8          & 96.2          & 97.0           & \textbf{97.4} & 96.2         & 96.8          & 97.3          & 96.2          & 97.1          & 96.9                            \\
OL-AUX   & 97.2          & 97.7          & 96.0           & 97.2          & 96.9          & 96.5         & 97.0           & 97.5          & 96.2          & 97.1          & 96.9                            \\ 
AuxiLearn   & 97.3          & 97.7          & 96.4           & 97.1          & 97.2          & 96.6         & 97.0           & 97.3          & 96.3          & 97.1          & 97.0                            \\ 
\midrule
AuxiNash & \textbf{97.4} & \textbf{98.1} & \textbf{96.5} & \textbf{97.2} & 97.1          & \textbf{97.0} & \textbf{97.1} & \textbf{97.6} & \textbf{96.3} & \textbf{97.6} & \textbf{97.2}  \\               \bottomrule
\end{tabular}
\label{tab:sc_1000}
\end{table}

\begin{table}[]
\setlength{\tabcolsep}{3pt}
\small
    \centering
    \caption{\textit{Speech Commands}. Test performance on each one of the digit keywords. The train data-set consist of 500 samples, uniformly distributed between the 10 classes. Values are averaged over 3 random seeds.}
    \vskip 0.11in
\begin{tabular}{lccccccccccc}
\midrule
\multicolumn{12}{c}{Speech Commands - 500} \\
\midrule \\
         & 0     & 1     & 2     & 3     & 4     & 5    & 6     & 7     & 8     & 9     & \multicolumn{1}{l}{Mean} \\ \midrule
STL      & 96.4          & 97.0           & 95.0           & 95.7          & 96.3          & 95.5         & 95.9          & 95.8          & 95.3          & 95.4          & 95.8                            \\ \midrule
LS       & 95.9          & 96.9          & 94.8          & 95.8          & 96.0           & 95.3         & 95.9          & 96.4          & 95.3          & 95.6          & 95.7                            \\
PCGrad   & 96.0           & 97.1          & 95.0           & 95.7          & 95.9          & 95.8         & 95.8          & 96.0           & 94.7          & 95.3          & 95.7                            \\
CAGrad  & 96.4          & 97.1          & 95.0           & 95.9          & 96.3          & 95.9         & 96.0           & 96.1          & 95.0           & 95.4          & 95.9                            \\
Nash-MTL & 96.6          & 97.4          & 94.1          & 95.7          & 96.3          & 95.4         & 95.9          & 96.2          & 94.8          & 95.2          & 95.7                            \\ \midrule
GCS      & 96.4          & 97.6          & \textbf{95.4} & 96.3          & 96.5          & 95.8         & 96.1          & 96.5          & 96.0           & 97.0           & 96.3                            \\
OL-AUX   & 96.2          & 97.2          & 94.6          & 96.0           & 96.4          & 95.7         & 95.9          & 96.4          & \textbf{96.1} & \textbf{97.5} & 96.2                             \\ 
AuxiLearn   & 96.2          & 97.5          & 94.8          & 96.3           & 96.3          & 95.8         & 96.1          & 96.3          & 95.3 & 95.8 & 96.0                             \\
\midrule
AuxiNash & \textbf{96.6} & \textbf{97.8} & 95.3          & \textbf{96.7} & \textbf{97.2} & \textbf{96.0} & \textbf{96.2} & \textbf{96.7} & 95.6          & 95.6          & \textbf{96.4} \\
\bottomrule
\end{tabular}
\label{tab:sc_500}
\end{table}

\subsection{Scene Understanding} We follow the training and evaluation protocol presented by previous studies \citep{Liu2019EndToEndML, liu2022auto, navon2022multi}. For all methods we train SegNet \citep{badrinarayanan2017segnet} model for $200$ epochs with Adam optimizer. We use learning rate of $1e-4$ for the first $100$ epochs, then reduced it to $5e-5$ for the remaining epochs. We use a batch size of $2$ and $8$ for NYUv2 and CityScapes respectively. During training we apply data augmentations for all compared methods, more specifically, we use random scale and random horizontal flip as augmentations. 
Following \citet{Liu2019EndToEndML, liu2022auto, navon2022multi} we report the test performance averaged over the last $10$ epochs.

\subsection{Controlling Task Preference}
We use the Multi-MNIST dataset that consits of $120K$ training samples and $20K$ testing samples. We allocate $12K$ training examples to form validation set. We use a variant of LeNet as the trained model. Specifically, the model contains $3$ CNN layers channels followed by $2$ fully connected layers. We train all methods using Adam optimizer with learning rate $1e-4$ for $50$ epochs and $256$ batch size. For \ourmethod{} we repeat this experiment $11$ times with varying preferences. More specifically, we select the first task preference from $p_1 \in \{ 0.01, 0.1, 0.2, 0.25, 0.4, 0.5, 0.6, 0.75, 0.8, 0.9, 0.99 \}$ and set $p_2=1-p_1$. We run the experiment equal amount of time with randomly selected seeds for Nash-MTL.

\section{Additional Results}

\subsection{Fixed Preference Vector}

\begin{table*}[!t]
\setlength{\tabcolsep}{3pt}
\small
    \centering
    \caption{\textit{NYUv2}. Test performance using fixed $p$. The preference for the main task, $p_{\text{main}}$, is in parentheses. Values are averaged over 3 random seeds.}
    \vskip 0.11in
\resizebox{0.95\textwidth}{!}{%
\begin{tabular}{cccccccccccccccccccc}
\toprule\\
 &  &  & \multicolumn{2}{c}{Segmentation} &  & \multicolumn{2}{c}{Depth} &  & \multicolumn{8}{c}{Surface Normal} &  &  &  \\
 \cmidrule(lr){4-5} \cmidrule(lr){7-8} \cmidrule(lr){10-17}
 &  &  & \multirow{2}{*}{mIoU $\uparrow$} & \multirow{2}{*}{Pix Acc $\uparrow$} &  & \multirow{2}{*}{Abs Err $\downarrow$} & \multirow{2}{*}{Rel Err $\downarrow$} &  & \multicolumn{2}{c}{Angle Distance $\downarrow$} &  & \multicolumn{5}{c}{Within $t^\circ$  $\uparrow$}  & $\mathbf{\Delta\%} \downarrow$ \\
 \cmidrule(lr){10-11} \cmidrule(lr){13-17} \cmidrule(lr){19-19} \cmidrule(lr){20-20}
 &  &  &  &  &  &  &  &  & Mean & Median &  & 11.25 &  & 22.5 &  & 30 \\
 \midrule
 & \multicolumn{2}{c}{STL} & $38.30$ & $63.76$ &  & $0.6754$ & $0.2780$ &  & $25.01$ & $19.21$ &  & $30.14$ &  & $57.20$ &  & $69.15$ \\
  \midrule
 & \multicolumn{2}{c}{AuxiNash $(0.7)$} & $\mathbf{41.36}$ & $66.48$ &  & $0.5214$ & $0.2138$ &  & $24.46$ & $18.78$ &  & $30.80$ &  & $58.33$ &  & $70.42$ & $\mathbf{-7.62}$  \\
 
 & \multicolumn{2}{c}{AuxiNash $(0.8)$} & $41.27$ & $66.47$ &  & $0.5322$ & $0.2172$ &  & $24.34$ & $18.67$ &  & $31.05$ &  & $58.53$ &  & $70.64$ & $-7.56$  \\
 
 & \multicolumn{2}{c}{AuxiNash $(0.9)$} & $41.35$ & $66.32$ &  & $0.5493$ & $0.2197$ &  & $\mathbf{24.39}$ & $\mathbf{18.61}$ &  & $\mathbf{31.23}$ &  & $\mathbf{58.64}$ &  & $\mathbf{70.68}$ & $-7.28$ \\
 \midrule
 & \multicolumn{2}{c}{\ourmethod{}} &
 $40.79$ & $\mathbf{66.79}$ &  & $\mathbf{0.5092}$ & $\mathbf{0.2042}$ & & $24.90$ & $19.31$ &  & $29.83$ &  & $57.07$ &  & $69.27$ & $-6.80$\\
 \bottomrule
\end{tabular}%
}
\label{tab:nyu-fixed-p}
\end{table*}

Our method, \ourmethod{}, dynamically adjusts the preference vector throughout the optimization process. It is possible, however, to fix the preference vector to its initial value and train a model over a grid of such preferences. As discussed in the main text, this procedure does not scale well as the  number of grid search values grows exponentially with the number of tasks. We also note that it is possible that the optimal preference changes during the optimization process, depending on the optimization dynamics. 

Here we present an ablation study of optimizing \ourmethod{} with fixed preference $p$. We use the NYUv2 dataset and train the model with $p_{\text{main}}\in\{0.9, 0.8, 0.7\}$. The preference for the two auxiliary tasks is equal, e.g., $p_{\text{aux},i}=0.1$ when $p_{\text{main}}=0.8$. The results are presented in Table~\ref{tab:nyu-fixed-p}.

\subsection{Sensitivity to Hyperparameters in Bi-level Optimization}
Bi-level optimization may be sensitive to hyperparameters (HPs), however, we show that this is not the case for AuxiNash. Generally, we did not tune HPs related to the inverse Hessian vector product approximation but used the ones reported by \cite{lorraine2020optimizing}. We find that these HPs generalize well to the various tasks that we studied without further tuning, suggesting that (good) HPs could transfer across tasks in our approach. For the remaining HP, namely the learning rate for updating the preference vector $p$, we fixed it to a single value of $5e-3$ throughout the experiments without tunning. Here we investigate the effect of the outer learning rate on the performance of AuxiNash. Specifically, we re-run the scene understanding experiment using NUYv2 dataset with $3$ outer learning rate values $\left\{ 1e-2, 5e-3, 1e-3 \right\}$. The results are presented in Table~\ref{tab:lr_ab} and averaged over $3$ seeds. The results in Table~\ref{tab:lr_ab} suggest that AuxiNash is not sensitive to the choice of learning rates, and significantly outperforms all baseline methods for a range of outer loop learning rate choices.

\begin{table*}[!h]
\setlength{\tabcolsep}{3pt}
\small
    \centering
    \caption{\textit{Outer learning rate ablation}. Test performance using varying outer learning rates. Values are averaged over 3 random seeds.}
    \vskip 0.11in
\resizebox{0.95\textwidth}{!}{
\begin{tabular}{cccccccccccccccccccc}
\toprule\\
 &  &  & \multicolumn{2}{c}{Segmentation} &  & \multicolumn{2}{c}{Depth} &  & \multicolumn{8}{c}{Surface Normal} &  &  &  \\
 \cmidrule(lr){4-5} \cmidrule(lr){7-8} \cmidrule(lr){10-17}
 &  &  & \multirow{2}{*}{mIoU $\uparrow$} & \multirow{2}{*}{Pix Acc $\uparrow$} &  & \multirow{2}{*}{Abs Err $\downarrow$} & \multirow{2}{*}{Rel Err $\downarrow$} &  & \multicolumn{2}{c}{Angle Distance $\downarrow$} &  & \multicolumn{5}{c}{Within $t^\circ$  $\uparrow$}  & $\mathbf{\Delta\%} \downarrow$ \\
 \cmidrule(lr){10-11} \cmidrule(lr){13-17} \cmidrule(lr){19-19} \cmidrule(lr){20-20}
 &  &  &  &  &  &  &  &  & Mean & Median &  & 11.25 &  & 22.5 &  & 30 \\
 \midrule
 & \multicolumn{2}{c}{AuxiNash $(lr=5e-3)$} & $40.79$ & $66.79$ &  & $0.5092$ & $0.2042$ &  & $25.01$ & $19.21$ &  & $30.14$ &  & $57.20$ &  & $69.15$ & $-6.80$ \\
  \midrule
 & \multicolumn{2}{c}{AuxiNash $(lr=1e-2)$} & $40.75$ & $66.74$ &  & $0.5175$ & $0.2055$ &  & $24.90$ & $19.31$ &  & $29.83$ &  & $57.07$ &  & $69.27$ & $-6.76$  \\
 & \multicolumn{2}{c}{AuxiNash $(lr=1e-3)$} & $41.03$ & $66.55$ &  & $0.5100$ & $0.2043$ &  & $25.08$ & $19.58$ &  & $29.34$ &  & $56.55$ &  & $68.83$ & $-6.22$  \\
 
 \bottomrule
\end{tabular}%
}
\label{tab:lr_ab}
\end{table*}

\end{document}